\documentclass[letterpaper]{article} 
\usepackage{aaai2026}  
\usepackage{times}  
\usepackage{helvet}  
\usepackage{courier}  
\usepackage[hyphens]{url}  
\usepackage{graphicx} 
\urlstyle{rm} 
\usepackage{natbib}  
\usepackage{caption} 
\frenchspacing  
\setlength{\pdfpagewidth}{8.5in}  
\setlength{\pdfpageheight}{11in}  
%
\usepackage{algorithm}
\usepackage{algorithmic}

\usepackage{latexsym}
\usepackage{amssymb}
\usepackage{amsmath}
\usepackage{amsthm}
\usepackage{booktabs}
\usepackage{enumitem}
\usepackage{soul}
\usepackage{color}
\usepackage{comment}

\usepackage[acronym]{glossaries}

\newacronym{aip}{AIP}{Analogical Inference Principle}

\newtheorem{theorem}{Theorem}
\newtheorem{lemma}[theorem]{Lemma}
\newtheorem{corollary}[theorem]{Corollary}
\newtheorem{proposition}[theorem]{Proposition}
\newtheorem{definition}[theorem]{Definition}
\newtheorem{remark}[theorem]{Remark}

\newcommand*\err[0]{\mbox{err}} 
\newcommand*\albl[1]{\overline{#1}}

\usepackage{todonotes}

%
\usepackage{newfloat}
\usepackage{listings}
\DeclareCaptionStyle{ruled}{labelfont=normalfont,labelsep=colon,strut=off} 
\lstset{%
	basicstyle={\footnotesize\ttfamily},
	numbers=left,numberstyle=\footnotesize,xleftmargin=2em,
	aboveskip=0pt,belowskip=0pt,%
	showstringspaces=false,tabsize=2,breaklines=true}
\floatstyle{ruled}
\newfloat{listing}{tb}{lst}{}
\floatname{listing}{Listing}
%
\pdfinfo{
/TemplateVersion (2026.1)
}

\setcounter{secnumdepth}{2} 

%



\title{Generalizing Analogical Inference from Boolean to Continuous Domains}
\author {
   Francisco Cunha\textsuperscript{\rm 1},
   Yves Lepage\textsuperscript{\rm 2},
   Miguel Couceiro\textsuperscript{\rm 1,\rm 3} 
   Zied Bouraoui\textsuperscript{\rm 4},
}
\affiliations {
    \textsuperscript{\rm 1}Dep. Mathematics, IST, U. Lisboa, Portugal\\
    \textsuperscript{\rm 2}Waseda University, IPS, Japan\\
    \textsuperscript{\rm 3}INESC-ID, DEI, IST, U.Lisboa, Portugal\\
    \textsuperscript{\rm 4}CRIL CNRS, U.Artois, France\\
    francisco.vicente.e.cunha@tecnico.ulisboa.pt, yves.lepage@waseda.jp, miguel.couceiro@inesc-id.pt, bouraoui@cril.fr
}

\usepackage{bibentry}

\begin{document}

\maketitle

\begin{abstract}
Analogical reasoning is a powerful inductive mechanism, widely used in human cognition and increasingly applied in artificial intelligence. Formal frameworks for analogical inference have been developed for Boolean domains, where inference is provably sound for affine functions and approximately correct for functions close to affine. These results have informed the design of analogy-based classifiers. However, they do not extend to regression tasks or continuous domains. In this paper, we revisit analogical inference from a foundational perspective. We first present a counterexample showing that existing generalization bounds fail even in the Boolean setting. We then introduce a unified framework for analogical reasoning in real-valued domains based on parameterized analogies defined via generalized means. This model subsumes both Boolean classification and regression, and supports analogical inference over continuous functions. We characterize the class of analogy-preserving functions in this setting and derive both worst-case and average-case error bounds under smoothness assumptions. Our results offer a general theory of analogical inference across discrete and continuous domains.

\end{abstract}


\section{Introduction}
Analogical reasoning seeks to identify structural similarities between different situations or objects, often formulated through analogical proportions of the form $a : b :: c : d$. Such reasoning has proven effective across domains ranging from case-based reasoning and classification to transfer learning and knowledge graph construction. In particular, it offers an appealing inductive principle in settings where direct supervision is limited, as it allows the inference of labels or attributes from analogically related examples.

The items $a, b, c, d$ are supposed to be described by a set of related attributes and represented by tuples of attribute values.
Attributes may be Boolean, nominal (i.e., finite attribute domains), or real-valued (e.g., embeddings). Modeling analogical reasoning has a long tradition in artificial intelligence, particularly in logic \cite{aristotle2011nicomachean,prade2013analogical}, cognitive modeling \cite{gentner1983structure,hofstadter2013surfaces}, and machine learning \cite{hu2023incontext}. In recent years, interest in formal analogical inference has grown, driven by its applications in few-shot learning \cite{DBLP:conf/icml/HwangGS13}, transfer learning \cite{Badra2020,CornuejolsMO20}, and interpretable AI \cite{DBLP:conf/mdai/Hullermeier20}. Analogical reasoning has also played a central role in natural language processing, particularly in morphological analysis and word embedding evaluation, where analogies such as “king is to queen as man is to woman” have served as both tasks and benchmarks \cite{mikolov_naacl_13,DBLP:conf/coling/BouraouiJS18,gladkova_naacl_2016}. More recently, neural models have been designed to detect and explain analogical relations in textual data, including analogies between sentences, concepts, and procedures \cite{DBLP:conf/acl/UshioASC20,DBLP:conf/emnlp/JacobSS23,DBLP:conf/emnlp/KumarS23}.

Foundational results characterizing the class of functions compatible with Boolean analogical inference were established in \cite{CouceiroHPR17}, and later works extended analogical classification techniques to nominal and numerical domains \cite{BounhasP23,CouceiroLMPR20}. Theoretical results, notably those of \cite{CouceiroHPR18}, have established that analogical inference is exact ({\it i.e.,} error-free) if and only if the labeling function is affine. More generally, they showed that if a Boolean function is $\varepsilon$-close to an affine function, then the probability of making an incorrect analogical prediction is bounded above by $4\varepsilon$, where the probability is taken over the random choice of training sets. This result has been influential in supporting the soundness of analogy-based classifiers in Boolean settings and later extended to a Galois theory of analogical classifiers \cite{Galois2024}.

Despite this progress, two important limitations remain.  First, these results are limited to discrete attribute spaces and binary classification tasks. In practice, many real-world applications involve real-valued features and regression tasks, settings that are not covered by current theory. To the best of our knowledge, no prior work has addressed analogical reasoning in the context of regression. Secondly, the theoretical guarantees established in the Boolean setting such as the generalization bound proposed in \cite{CouceiroHPR18,Galois2024} fail to extend to real-valued functions. This limits the application of these results to handle regression. 

In this work, we address these shortcomings by revisiting analogical inference from both theoretical and practical perspectives.  Firstly, we construct an explicit counterexample demonstrating that the main generalization bound proposed in \cite{CouceiroHPR18} fails to hold, even for Boolean functions that are arbitrarily close to affine. This challenges a foundational assumption in the existing theoretical framework. Secondly, we extend analogical inference to real-valued domains by introducing a unified model based on parameterized analogies defined via generalized means. This formulation naturally subsumes classical notions such as additive and geometric analogies and supports analogical reasoning in regression tasks. Thirdly, we introduce functional distances tailored to this generalized setting and derive both uniform and probabilistic bounds on inference error, yielding worst-case and average-case guarantees under smoothness assumptions. Finally, we characterize the class of continuous functions that preserve analogical structures, showing that they correspond to a family of generalized-power functions with well-defined structural properties.


The remainder of this paper is organized as follows. Section \ref{sec:background} reviews foundational work on analogical inference, including Boolean and nominal settings, and discusses the concept of analogy-preserving functions. Section \ref{sec:counterexp} presents a counterexample that falsifies a widely cited generalization bound, even in the Boolean case. Section \ref{sec:apf} introduces a generalized framework for analogical proportions based on parameterized means and redefines analogy-preserving functions in continuous domains. In Section \ref{sec:reg}, we formalize analogical inference for regression, characterize the class of compatible functions, and establish performance guarantees. Section \ref{sec:classification} shows how the Boolean case is recovered. We conclude with a summary and outline directions for future research. 


\section{Background and Related Work}
\label{sec:background}
Analogical inference builds on the foundational idea of analogical proportions, quaternary relations of the form $a : b :: c : d$, which express that the transformation from $a$ to $b$ is analogous to that from $c$ to $d$. This concept has been studied both in logic and learning, where it forms the basis for inference schemes applicable to classification, regression, and relational reasoning. In formal settings, analogical proportions are evaluated componentwise, and their algebraic properties have been studied over Boolean and nominal domains.

\subsection{Analogical Inference from Boolean to General Cases}
\label{subsection:to:general:cases}

In the Boolean case, the most well-known models of analogy  are {\it Klein's model} \cite{klein_anthrolopology_1983}  of Boolean analogy\footnote{Here the columns are precisely the tuples $(a,b,c,d)$ for which the analogy holds.}
and the so-called {\it
minimal model} \cite{miclet_sqaru_2009}\footnote{Note that $M$ contains only patterns of the form $x:x::y:y$ and $x:y::x:y$, for $x,y\in \{0,1\}$.
}.
\[
K :=
\begingroup
\setlength{\arraycolsep}{1.5pt} 
\begin{pmatrix}
0 & 1 & 1 & 0 & 1 & 0 & 0 & 1 \\
0 & 1 & 0 & 1 & 0 & 1 & 0 & 1 \\
0 & 0 & 1 & 1 & 0 & 0 & 1 & 1 \\
0 & 0 & 0 & 0 & 1 & 1 & 1 & 1
\end{pmatrix}
\quad
M :=
\begin{pmatrix}
0 & 1 & 1 & 0 & 0 & 1 \\
0 & 1 & 0 & 1 & 0 & 1 \\
0 & 0 & 1 & 0 & 1 & 1 \\
0 & 0 & 0 & 1 & 1 & 1
\end{pmatrix}
\endgroup
\]


In the Boolean case, the problem of finding an $x\in \{0,1\}$ such that $a:b::c:x$ holds, does not always have a solution.
For instance, neither $0:1::1:x$ nor $1:0::0:x$ has a solution in the minimal model $M$ (since $0111$, $0110$, $1000$, $1001$ are not valid patterns for an analogical proportion). In fact,
a solution exists if and only if $(a \equiv b) \vee (a \equiv c)$ holds.
When a solution exists, it is unique and is given by $x = c \equiv (a \equiv b)$
(see also \cite{lepage_iarml_2023}).
This corresponds to the original view advocated by S. Klein \cite{Klein1983}, who however applied the latter formula even to the cases $0:1::1:x$ and $1:0::0:x$, where it yields $x=0$ and $x=1$ respectively.

In the nominal case, the situation is similar. The analogical proportion $a:b::c:x$ may have no solution ({\it e.g.,}  in the minimal model, $s:t::t:x$ has no solution as soon as $s \neq t$), and otherwise (if $a=b$ or $a=c$) the solution is unique, and is given by $x = b$ if $a = c$ and  $x = c$ if $a = b$. Namely, the solutions of $s:t::s:x$, $s:s::t:x$, and $s:s::s:x$ are $x=t$, $x=t$, and $x=s$, respectively.
This motivates the following inference pattern, first  formalized by \cite{SY06}; see also \cite{BouPraRicECAI2014}:
\begin{align}
\label{eq:inf-pattern}
\frac{\forall i \in \{1, \dots, m\},\,\,\, a_i : b_i :: c_i : d_i \text{~holds}}{a_{m+1} : b_{m+1} :: c_{m+1} : d_{m+1} \text{~holds}}
\end{align}
 which generalizes analogical inference over attribute vectors enabling to compute $d_{m+1}$, provided that 
$a_{m+1} : b_{m+1} :: c_{m+1} :x$ has a solution. 
This pattern expresses a rather bold inference which amounts to saying that if the representations of four items are in analogical proportion on $m$ attributes, they should remain in analogical proportion with respect to their labels. We can restrict ourselves to binary labels, since a multiple class prediction can be obtained by solving a series of binary class problems.  A key question, addressed in this paper, is to characterize the settings under which this inference is valid.

\subsection{Analogy-preserving functions and analogy based classification.} The analogical inference pattern implicitly relies on the assumption that labels are functionally determined by the attribute values.  More precisely, there exists some unknown function $f$ such that for any item $\mathbf{e} = (e_1, \dots, e_n)$, the label is given by $e_{n+1} = f(e_1, \dots, e_n)$. The function $f$ can be viewed of as a classifier that assigns a (unique) class to each item based on its $n$ attributes. 
Since the solutions of analogical equations (when they exist) are unique, the inference pattern \eqref{eq:inf-pattern} can equivalently be formulated as follows:
\begin{align}
\label{eq:inf-pattern2}
\begin{array}{ccccccc}
a_1 & \cdots & a_i & \cdots & a_n &f(\mathbf{a})\\
b_1 & \cdots &b_i &\cdots & b_n &f(\mathbf{b})\\
c_1 & \cdots & c_i &\cdots &c_n &f(\mathbf{c})\\
\hline
d_1 & \cdots &d_i &\cdots &d_n &f(\mathbf{d})\\
\end{array}
\end{align}
where $\mathbf{a} = (a_1, \dots, a_n)$, $\mathbf{b} = (b_1, \dots, b_n)$, $\mathbf{c} = (c_1, \dots, c_n)$ and $\mathbf{d} = (d_1, \dots, d_n)$ are instances (objects, items, etc.).

Assuming that the latter four instances are in
analogical proportion for each of the $n$ attributes describing them, and that the class labels are known for $\mathbf{a}, \mathbf{b}, \mathbf{c}$ but unknown for $\mathbf{d}$, then one may infer that the label for $\mathbf{d}$ as a solution
of an analogical proportion equation \cite{BounhasPR17a,CouceiroHPR17}. 
The effectiveness of this analogical inference rule led to several studies that aimed to determine which classifiers were compatible with the \acrfull{aip} \cite{miclet_jair_08,BounhasPR17a,CouceiroHPR17,CouceiroHPR18,CouceiroLMPR20,DBLP:journals/ijar/BounhasP24,Galois2024}. 

In the case of Boolean attributes, a key result has been established in \cite{CouceiroHPR17}, where it was shown that
the set of functions for which analogical inference is sound,
i.e., no error occurs, are the \emph{analogy-preserving} (AP) functions, which coincide exactly with the set of affine Boolean functions. More precisely, they showed the following result.

\begin{theorem}\label{AP_is_L}
The class of AP functions is exactly the class $\mathcal{L}$ of affine functions, {\it i.e.}, functions $f:\mathbb{B}^n\rightarrow \mathbb{B}$ of the form $f=a_1x_1+\ldots+a_nx_n+b$, for $a_i\in \mathbb{B}$ and where $+$ is the addition over the 2-element field $\mathbb{B}$.
\end{theorem}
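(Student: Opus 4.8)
The plan is to prove the two inclusions separately: first that every affine function is analogy-preserving, and then that every analogy-preserving function is affine.

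For the easy direction ($\mathcal{L}\subseteq \mathrm{AP}$), I would take $f(x_1,\dots,x_n)=a_1x_1+\dots+a_nx_n+b$ over the two-element field $\mathbb{B}$ and a Boolean analogical proportion $\mathbf{a}:\mathbf{b}::\mathbf{c}:\mathbf{d}$, meaning that componentwise the quadruple $(a_i,b_i,c_i,d_i)$ is one of the eight columns of Klein's matrix $K$. The key observation is that a Boolean tuple $(p,q,r,s)$ is a valid analogical proportion in Klein's model precisely when $p+q=r+s$ (equivalently $p+q+r+s=0$ over $\mathbb{B}$, i.e. $s=p+q+r$); one checks this directly against the eight columns of $K$. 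Hence $d_i=a_i+b_i+c_i$ for every $i$, and then by linearity $f(\mathbf{d})=\sum_i a_i d_i + b = \sum_i a_i(a_i+b_i+c_i)+b$. Expanding and regrouping, and using that $3b=b$ over $\mathbb{B}$, this equals $f(\mathbf{a})+f(\mathbf{b})+f(\mathbf{c})$, so $f(\mathbf{a}):f(\mathbf{b})::f(\mathbf{c}):f(\mathbf{d})$ also holds — $f$ preserves the proportion.

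For the converse ($\mathrm{AP}\subseteq\mathcal{L}$), suppose $f:\mathbb{B}^n\to\mathbb{B}$ is analogy-preserving. I would exploit the AP property on carefully chosen proportions to force the algebraic structure of $f$. The cleanest route is to show that for all $\mathbf{x},\mathbf{y},\mathbf{z}\in\mathbb{B}^n$ we have $f(\mathbf{x}+\mathbf{y}+\mathbf{z}) = f(\mathbf{x})+f(\mathbf{y})+f(\mathbf{z})$: indeed, $\mathbf{x}:\mathbf{y}::\mathbf{z}:(\mathbf{x}+\mathbf{y}+\mathbf{z})$ is a valid proportion by the componentwise characterization above, so AP gives $f(\mathbf{x}+\mathbf{y}+\mathbf{z}) = f(\mathbf{x})+f(\mathbf{y})+f(\mathbf{z})$ as the unique solution of the analogical equation on the labels. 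Setting $b:=f(\mathbf{0})$ and $g(\mathbf{x}):=f(\mathbf{x})+b$, this functional equation specializes (take $\mathbf{z}=\mathbf{0}$) to $g(\mathbf{x}+\mathbf{y})=g(\mathbf{x})+g(\mathbf{y})$, i.e. $g$ is additive with $g(\mathbf{0})=0$. An additive map $\mathbb{B}^n\to\mathbb{B}$ is linear, hence $g(\mathbf{x})=\sum_i a_i x_i$ with $a_i=g(\mathbf{e}_i)$, and therefore $f(\mathbf{x})=\sum_i a_i x_i + b\in\mathcal{L}$.

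The main obstacle is the converse direction, and specifically pinning down exactly what "analogy-preserving" is taken to mean here so that the step "AP implies $f(\mathbf{x}+\mathbf{y}+\mathbf{z})=f(\mathbf{x})+f(\mathbf{y})+f(\mathbf{z})$" is legitimate. In Klein's model the label-side proportion $f(\mathbf{a}):f(\mathbf{b})::f(\mathbf{c}):x$ always has a (unique) solution, namely $x=f(\mathbf{a})+f(\mathbf{b})+f(\mathbf{c})$, so requiring that $f(\mathbf{d})$ equal this solution whenever the attribute-side proportions all hold is exactly the needed identity; one must make sure the definition of AP adopted in the background section is this "Klein-model" version (rather than the minimal-model version, which would additionally require handling the no-solution patterns and would only yield the identity on the restricted set of quadruples where a solution exists — still enough, since the relevant triples $(\mathbf{x},\mathbf{y},\mathbf{z})$ with $\mathbf{z}=\mathbf{0}$ and with $\mathbf{x}=\mathbf{y}$ fall in the solvable regime $a\equiv b$ or $a\equiv c$, but the bookkeeping is more delicate). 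Once the definition is fixed, the argument is short; the verification that Klein's eight columns are exactly the solutions of $p+q+r+s=0$ is the only routine computation.
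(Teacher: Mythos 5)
A preliminary remark: the paper does not actually prove Theorem~\ref{AP_is_L}; it imports it from \cite{CouceiroHPR17}, so there is no in-paper proof to compare against. Judged on its own merits, your argument is correct and essentially complete \emph{for Klein's model} $K$: the observation that the eight columns of $K$ are exactly the even-parity quadruples (equivalently $d=a+b+c$ over $\mathbb{B}$) makes both inclusions short, and passing from additivity of $g(\mathbf{x})=f(\mathbf{x})+f(\mathbf{0})$ to linearity via $g(\mathbf{x})=\sum_i x_i g(\mathbf{e}_i)$ is unproblematic since the scalars are only $0$ and $1$.

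The genuine gap is in your treatment of the minimal model $M$, which is the model the paper's own Definition of the analogical root (via the $solvable$ predicate) and Algorithm~1 actually use, and for which the paper later asserts the theorem alongside $K$. You locate the residual difficulty in the label-side solvability condition, but that is not where the obstruction lies. The problem is on the \emph{attribute} side: the proportion $\mathbf{x}:\mathbf{y}::\mathbf{0}:(\mathbf{x}+\mathbf{y})$ is \emph{not} valid componentwise in $M$ whenever some coordinate has $(x_i,y_i)=(1,0)$, because the pattern $1:0::0:1$ belongs to $K$ but not to $M$, and every reordering of the quadruple runs into one of the excluded patterns $0110$ or $1001$ for some sign combination of $(x_i,y_i)$. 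Hence the hypothesis of the AP condition simply does not fire for such triples, and the functional equation $f(\mathbf{x}+\mathbf{y})=f(\mathbf{x})+f(\mathbf{y})+f(\mathbf{0})$ cannot be extracted the way you propose; the ``solvable regime $a\equiv b$ or $a\equiv c$'' you invoke governs the labels, not the validity of the attribute-side premise. To close the minimal-model case one must work harder, e.g.\ first deriving the identity only for quadruples that are $M$-valid (such as when $\mathbf{y}$ and $\mathbf{z}$ differ from $\mathbf{x}$ on disjoint coordinate sets) and then chaining single-coordinate modifications, while also handling the vacuous cases where the label equation is unsolvable. What your proof does establish is affine $\Leftrightarrow$ AP-for-$K$, and since every $M$-valid quadruple is $K$-valid this also gives affine $\Rightarrow$ AP-for-$M$; but the converse inclusion AP-for-$M\subseteq\mathcal{L}$, which is the substantive content of the theorem in the form this paper relies on, is left unproved.
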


Moreover, when the function is close to being affine, it was also shown that the prediction accuracy remains high \cite{CouceiroHPR18}. Assuming a uniform distribution and the Hamming distance $d$ on the Boolean function space $\bigcup_{n>0}\mathbb{B}^{\mathbb{B}^n}$, the classification performance remained high for classifiers close to the AP functions.   

\begin{definition}\label{def:root}Given a sample set $S \subseteq \mathbb{B}^n$ and a function $f\in \mathbb{B}^{\mathbb{B}^n}$,  
the  {\it analogical
root} \cite{HugPraRicSerECAI2016} of a given element $\mathbf{x} \in \mathbb{B}^n$, denoted by $\mathbf{R}_S(\mathbf{x},f)$, is the set 
{\small
$$
\{(\mathbf{a}, \mathbf{b}, \mathbf{c}) \in S^3  \, :\,
\mathbf{a} : \mathbf{b} :: \mathbf{c} : \mathbf{x} \mbox{ and } solvable(f(\mathbf{a}), f(\mathbf{b}), f(\mathbf{c}))\}.$$}
The {\it analogical extension} $\mathbf{E}_S(f)$ of $S$ w.r.t. $f$ is thus defined as the set of all those $\mathbf{x} \in \mathbb{B}^m$ such that $\mathbf{R}_S(\mathbf{x},f) \neq \emptyset$. 
\end{definition}
 Clearly, $S \subseteq \mathbf{E}_S(f)$ since $\mathbf{a}:\mathbf{a}::\mathbf{a}:\mathbf{a}$ always holds.
Define 
$\err_{S,f}= P(\{\mathbf{x} \in \mathbf{E}_S(f) \setminus S \,|\, \albl{\mathbf{x}}_{S,f}\neq f(\mathbf{x})\} ),$ where $\albl{\mathbf{x}}_{S,f}$ is the predicted label by AIP of $\mathbf{x}$. 
For classification purposes, we are thus interested in  $\mathbf{x}\in\mathbf{E}_S(f) \setminus S$ whose 
predicted label $\albl{\mathbf{x}}_{S,f}$ is $f(\mathbf{x})$, and
 $\mathbf{E}_S(f)$ is is said to be {\it sound} if $\err_{S,f} = 0$.

\begin{theorem}
\label{bound2}
Let $\varepsilon \in [0,\frac{1}{2}]$, and let $\delta \in [0,1]$. Consider the uniform distance $\rm{d}'$  on $\mathbb{B}^{\mathbb{B}^m}$ given by $${\rm d}(f,g)=\frac{\vert\{\mathbf{x}\in \mathbb{B}^m~:~f(\mathbf{x})\neq g(\mathbf{x})\}\vert}{2^m}.$$ If ${\rm d}(f,\mathcal{L})=\min_{g\in \mathcal{L}}{\rm d}(f,g)<\varepsilon$, then $P(\err_{S,f} > \delta) \leq 4 \varepsilon \cdot (1-\delta)$.
\end{theorem}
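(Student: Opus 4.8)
The natural plan is to reduce everything to the affine case. Pick $g\in\mathcal{L}$ attaining $\mathrm{d}(f,\mathcal{L})=\mathrm{d}(f,g)<\varepsilon$ (the minimum is attained since $\mathcal{L}$ is finite) and set $B=\{\mathbf{x}\in\mathbb{B}^m:f(\mathbf{x})\neq g(\mathbf{x})\}$, so $|B|<\varepsilon\,2^m$. By Theorem~\ref{AP_is_L}, $g$ is analogy-preserving: whenever $\mathbf{a}:\mathbf{b}::\mathbf{c}:\mathbf{d}$ holds componentwise and $g(\mathbf{a}):g(\mathbf{b})::g(\mathbf{c}):y$ is solvable, its unique solution is $y=g(\mathbf{d})$. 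Hence, for any $\mathbf{x}\notin B$, every triple $(\mathbf{a},\mathbf{b},\mathbf{c})\in\mathbf{R}_S(\mathbf{x},f)$ whose three components all lie outside $B$ produces the prediction $g(\mathbf{x})=f(\mathbf{x})$, i.e.\ the correct label. Thus a misclassification can only come from a direct interaction with $B$: either $\mathbf{x}\in B$, or the value $\albl{\mathbf{x}}_{S,f}$ returned by the aggregation rule is driven by a triple that puts a point of $B$ in one of its three slots.

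On this basis I would first bound the expected error. Using Fubini and that the distribution on $\mathbf{x}$ is uniform, $E_S[\err_{S,f}]=E_{\mathbf{x}}\,P_S\big(\mathbf{x}\in\mathbf{E}_S(f)\setminus S \text{ and } \albl{\mathbf{x}}_{S,f}\neq f(\mathbf{x})\big)$, and I would split this mass according to the four ``slots'' above. The contribution of the event $\mathbf{x}\in B$ is at most $|B|/2^m<\varepsilon$. For each of the three positions $\mathbf{a},\mathbf{b},\mathbf{c}$, a double-counting over the triples $(\mathbf{a},\mathbf{b},\mathbf{c})\in S^3$ with $\mathbf{a}\oplus\mathbf{b}\oplus\mathbf{c}=\mathbf{x}$ and the distinguished slot in $B$ --- controlling, for each $\mathbf{x}$, the fraction of its root triples that are ``contaminated'' by $B$ --- should again contribute at most $\varepsilon$. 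Adding the four pieces would give $E_S[\err_{S,f}]\le 4\varepsilon$. It would then remain to convert this expectation bound into the tail bound $P(\err_{S,f}>\delta)\le 4\varepsilon(1-\delta)$.

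This conversion is where I expect the real difficulty --- and the fragile point --- to lie. Markov's inequality applied to $\err_{S,f}\in[0,1]$ only yields $P(\err_{S,f}>\delta)\le E_S[\err_{S,f}]/\delta\le 4\varepsilon/\delta$, which is strictly weaker than $4\varepsilon(1-\delta)$ for every $\delta\in(0,1)$ and degenerates as $\delta\to 0$, whereas the target bound tends to $4\varepsilon$ there; and no elementary inequality turns a bound on the mean of $\err_{S,f}$ into a tail bound of the form $\text{const}\cdot(1-\delta)$ without extra control over the shape of its distribution. The only additional structure in sight is that $\{\err_{S,f}>\delta\}$ forces $|S|<(1-\delta)2^m$, since every misclassified point lies in $\mathbf{E}_S(f)\setminus S\subseteq\mathbb{B}^m\setminus S$; one would have to couple this with the law of $S$ in the random-training-set model and with a per-point estimate of $P_S\big(\mathbf{x}\in\mathbf{E}_S(f)\setminus S,\ \albl{\mathbf{x}}_{S,f}\neq f(\mathbf{x})\big)$ that genuinely carries a $(1-\delta)$ factor, and make the two fit together. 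Getting this coupling right is the crux; it is precisely the kind of expectation-to-tail shortcut that can silently fail, which --- given the stated aim of the paper --- is presumably what the counterexample of Section~\ref{sec:counterexp} exposes.
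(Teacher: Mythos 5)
Your closing suspicion is exactly right, and it should be promoted from a caveat to the conclusion: the statement is false, and the paper does not prove it --- Theorem~\ref{bound2} is quoted from \cite{CouceiroHPR18} precisely in order to be refuted. Section~\ref{sec:counterexp} takes $f:\mathbb{B}^4\to\mathbb{B}$ to be the indicator of the all-ones tuple, for which ${\rm d}(f,\mathcal{L})={\rm d}(f,0)=\tfrac{1}{16}$, and shows by exhaustive enumeration over training sets that $P(\err_{S,f}>0)\ge 0.42$, whereas the theorem at $\delta=0$ would cap this at $4\varepsilon\approx 0.25$. Note that the refutation already occurs at $\delta=0$, where the factor $(1-\delta)$ equals $1$; so the fatal flaw is not only in the expectation-to-tail conversion you flag as the crux, but already in the claim that closeness to $\mathcal{L}$ controls the probability of \emph{any} error. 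The mechanism is visible in the counterexample: for every $S\subseteq\mathbb{B}^4\setminus\{\mathbf{1}\}$ one has $f|_S\equiv 0$, so every solvable root triple for $\mathbf{1}$ predicts $0\neq f(\mathbf{1})$, and the event that $S^3$ contains at least one triple in analogical proportion with $\mathbf{1}$ has probability far exceeding $4\varepsilon$ under the uniform law on subsets. Hence no completion of your argument can exist, and there is no proof in the paper to compare your route against.

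Two remarks on your intermediate steps, since they reproduce the folklore argument whose failure the paper diagnoses. First, your slot-by-slot accounting for $E_S[\err_{S,f}]\le 4\varepsilon$ is not established: $\albl{\mathbf{x}}_{S,f}$ is an aggregate (a mode) over \emph{all} root triples, so a single contaminated triple does not contribute a proportional share of error mass --- whether it flips the label depends on the entire multiset of solutions, and the contamination events are correlated through $S$. (In the counterexample the expectation bound happens to hold, because only the point $\mathbf{1}$ carries error mass; it is the tail bound that breaks.) Second, the part of your argument that does survive --- affine $g$ propagates correct labels along triples disjoint from $B$ --- is exactly what the paper salvages in its positive results: Proposition~\ref{p4} and Corollary~\ref{c5} give a worst-case bound in terms of the uniform distance to the AP class, and Proposition~\ref{p6} gives an average-case bound under a regular-sampling hypothesis, replacing the unprovable tail bound over random $S$ with assumptions under which the per-triple error control you sketched actually composes.
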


Note that this result establishes a tight connection between inference errors and functional distances.

\subsection{Limitations and Drawbacks}
When attributes are valued on finite domains  $X_i$, i.e. nominal case (which includes the Boolean case), the problem of characterizing analogy-preserving functions $f\colon {\bf X}\to X$, for ${\bf X}=X_1\times \cdots \times X_n$,  has been partially addressed for binary classification ($|X|\leq 2$). This has led to the definition of {\it hard AP functions} (HAP) $f \colon \mathbf{X} \to X$, which are either ``essentially unary'' or ``quasi-linear'', {\it i.e.,} for which there exist $\varphi \colon \{0,1\} \to X$ and $\varphi_i \colon X_i \to \{0,1\}$ ($1 \leq i \leq m$) such that 
$f = \varphi(\varphi_1(x_1) \oplus \dots \oplus \varphi_n(x_n))$. 
However,  this characterization is limited to nominal attributes, binary labels and constrained models of analogy, typically those satisfying the patterns $(x,x,y,y)$ and $(x,y,x,y)$. In addition, prior work often assumes a decomposable model of analogy, which may not hold in real-world data. These limitations have been critically examined in recent work, such as \cite{DBLP:journals/ijar/BounhasP24}, which revisits analogical proportions and inference under relaxed assumptions.

These limitations motivate the reformulation of classification tasks using analogy-based regression. In the next section, we construct a counterexample that demonstrates the failure of the error bound from \cite{CouceiroHPR18}, even in the Boolean setting. We will then introduce an analogical inference principle tailored to regression and continuous domains, enabling extensions to multi-class settings. This framework yields new theoretical guarantees under reasonable smoothness and distributional assumptions.

\section{An Illustrative Counter-example}
\label{sec:counterexp}
In this section, we provide a counterexample to the generalization bound, which asserts that analogical inference remains accurate for Boolean functions close to the affine class $\mathcal{L}$. Let $\mathbb{B}=\{0,1\}$, we exhibit a concrete counterexample to Theorem~\ref{bound2} for $\delta=0$, by providing a Boolean function $f:\mathbb{B}^m\to\mathbb{B}$ such that $d(f,\mathcal{L})\leq \varepsilon$ but $P(\err_{S,f}>0)> 4\varepsilon$. 

\begin{definition}  Let $f\colon \mathbb{B}^4\to\mathbb{B}$ be the function given by: 
\begin{align}
f(\mathbf{x}) =
\begin{cases}
1 & \text{if } \mathbf{x} =\mathbf{1} \\
0 & \text{otherwise},
\end{cases}
\end{align}
where $\mathbf{1}$ denotes the constant-1 tuple $(1,1,1,1)$. Since the constant-0 Boolean function $h\colon \mathbb{B}^4\to\mathbb{B}$ ({\it i.e.,} $h(\mathbf{x})=0$, for every $\mathbf{x}\in \mathbb{B}^4$) is affine, it is not difficult to see that ${\rm d}(f,\mathcal{L})=\min_{g\in \mathcal{L}}{\rm d}(f,g)={\rm d}(f,h)=\frac{1}{16}$.
\end{definition}

\paragraph{Failure of analogical inference.}
Let $S\subseteq \mathbb{B}^4\setminus\{\mathbf{1}\}$, and suppose that $\mathbf{1}\in E_S(f)$. Then $\overline{\mathbf{1}}_{S,f}=0$, since $f|_S\equiv0$ and the only solution of $0:0::0:x$ for both Klein's and the minimal  models is $x=0$.
This means that if for a given $S\subseteq \mathbb{B}^4$ we have $\mathbf{1}\in \mathbf{E}_S(f)\setminus S$, then $\mathbf{1}\in\{\mathbf{x}\in\mathbb{B}^4\,|\,\overline{\mathbf{x}}_{S,f}\neq f(\mathbf{x})\}$ (in particular this set is nonempty), which means that $S\in\{T\in \mathcal{P}(\mathbb{B}^4)
\,|\,\err_{T,f}>0\}$. So we get 
$$\{S \in \mathcal{P}(\mathbb{B}^4) \,|\, \err_{S,f} > 0\} \supseteq \\
\{S \in \mathcal{P}(\mathbb{B}^4) \,|\, \mathbf{1} \in \mathbf{E}_S(f) \setminus S\}$$
which implies that 
\begin{equation*}
\{S \in \mathcal{P}(\mathbb{B}^4) \,|\, \err_{S,f} > 0\} \geq
\{S \in \mathcal{P}(\mathbb{B}^4) \,|\, \mathbf{1} \in \mathbf{E}_S(f) \setminus S\}.
\end{equation*}
Thus
$P\big(\{S \in \mathcal{P}(\mathbb{B}^4) \,|\, \err_{S,f} > 0\}\big)$ is greater 
or equal to
$$
P\big(\{S \in \mathcal{P}(\mathbb{B}^4) \,|\, \mathbf{1} \in \mathbf{E}_S(f) \setminus S\}\big).
$$
Hence,
\begin{equation}
P(\err_{S,f} > 0) \geq \\
\frac{\left| \left\{ S \in \mathcal{P}(\mathbb{B}^4) \,\middle|\, \mathbf{1} \in \mathbf{E}_S(f) \setminus S \right\} \right|}{2^{16}}.
\label{eq:prob-lowerbound}
\end{equation}
The first inequality follows from the monotony of $P$ (which is the uniform probability measure over $\mathcal{P}(\mathbb{B}^4)$) and the second from its definition. We can compute this lower bound for $P(\err_{S,f}>0)$ using a brute force algorithm (Algorithm 1). It consists in checking, for each of the $2^{(2^{n}-1)}$ (in our case $n=4$) subsets $S_i$ of $\mathbb{B}^n\setminus\{\mathbf{1}\}$, whether  there is any triple $(\mathbf{a},\mathbf{b},\mathbf{c})\in S^3$ such that $\mathbf{a}:\mathbf{b}::\mathbf{c}:\mathbf{1}$, and, if so, increasing the running total by $1$.\footnote{Here $[T[i]|\mathbf{1}]$ is of the form $(\mathbf{a},\mathbf{b},\mathbf{c},\mathbf{1})$, for $T[i]=(\mathbf{a},\mathbf{b},\mathbf{c})$.} Fortunately, since in this example $n=4$, the algorithm is still manageable and takes about 30 seconds to run. We obtain from Algorithm~\ref{alg:CE} the lower bound  $$P(\err_{S,f}>0)\geq0.42,$$ which  contradicts the upper bound $$P(\err_{S,f}>0)\leq 4{\rm d}(f,\mathcal{L})= 4\frac{1}{16}=0.25$$ of Theorem~\ref{bound2} (using $\delta=0)$.

\begin{algorithm}[tb]
\caption{Estimate of the proportion of subsets with analogical error. Here, $\text{tupleslo}(A, n)$ outputs all $n$-tuples over $A$ in lexicographic order,  $\text{subsets}(A)$ outputs the set of subsets of $A$, whereas $\text{analogyQ}$ receives as an input a quadruple of $m$-tuples, checks whether each component of the quadruple constitutes an analogy under the minimal model $M$, and outputs True if so, and False, otherwise.}
\label{alg:CE}
\textbf{Input}: Integer $n$ (dimension of Boolean vectors) \\
\textbf{Output}: Estimated probability $P(\text{err}_{S,f} > 0)$ \\
\begin{algorithmic}[1]
\STATE Let $D \gets \text{tupleslo}([ 0, 1 ], n)$
\STATE Let $R \gets \text{subsets}(D \setminus \{\mathbf{1}\})$
\STATE $c \gets 0$
\FOR{each $S \in R$ such that $|S| \geq 3$}
\STATE Let $T \gets \text{tupleslo}(S, 3)$
\STATE $stop \gets \text{False}$, $i \gets 0$
\WHILE{not $stop$ and $i < |T|$}
    \STATE $stop \gets \text{analogyQ}([T[i]|\mathbf{1}])$
    \STATE $i \gets i + 1$
\ENDWHILE
\IF{$stop$}
    \STATE $c \gets c + 1$
\ENDIF
\ENDFOR
\STATE \textbf{return} $c / (2^{2^n})$
\end{algorithmic}

\end{algorithm}

\section{Analogy-Preserving Functions}
\label{sec:apf}
The generalization of analogical reasoning from Boolean to continuous domains requires a more flexible notion of analogy that can capture numeric structure. To address this, we adopt a parameterized framework based on generalized means, originally studied by Hölder  \cite{hoelder_mittel_1889}, and recently proposed for analogical inference in \cite{lepage_iarml_2024}. This approach defines analogical proportions over real-valued tuples and supports analogical inference in both classification and regression settings.

\subsection{Parameterized Analogical Proportions}
For domains where inputs are represented as vectors, matrices or higher-order tensors, the unified view of analogical proportions is based on 
the \emph{generalized mean in $p$}:
\begin{align}
m_p(x_1,x_2,\ldots,x_n) = \lim_{r\rightarrow p} \left(\frac{1}{n}\sum_{i=1}^{n}x_i^r\right)^{1/r}.
\end{align}
With this, $(a,b,c,d)\in \mathbb{R}_+^4$ constitutes a valid analogy if there is a $p\in \mathbb{R}$  such that $m_p(a,d) = m_p(b,c)$, {\it i.e.}, the generalized mean in $p$ of the \emph{extremes} $a$ and $d$  equals the generalized mean in $p$ of the \emph{means} $b$ and $c$.

This is denoted as ``analogy in analogical power $p$'' by  $a : b ::^p c : d$, and
it was shown that it has several desirable properties, in particular, that $::^p$ is transitive and that it constitutes an equivalence relation for $p\in \mathbb{R}$. 
One of the advantages of relying on this parameterized notion on $p$, is that it naturally subsumes well known mean notions, such as the commonly used arithmetic (for $p=1$), geometric (for $p=0$) or harmonic means (for $p=-1$).
As consequence, for any four increasing positive real numbers $a,b,c$ and $d$ there exists a unique
analogical power $p$ such that $a : b ::^p c : d$ holds. Notice that such analogy can be reduced to an equivalent arithmetic analogy and that any analogical equation has a solution for increasing numbers. 

\subsection{Analogical Roots and Extensions}

\begin{definition}
    Let $\mathbf{x}\in\mathbb{R}_+^n$, and $\mathbf{p}=(p_1,...,p_n)\in \mathbb{R}_+^n$ and $q\in \mathbb{R}_+$. Let $S$ be a finite subset of $\mathbb{R}_+^n$. The $(\mathbf{p};q)$-analogical root  $R_S(f,\mathbf{x})$ of $\mathbf{x}$ with respect to $f$ and $S$ is defined as follows:
\begin{eqnarray*}
R_S^{(\mathbf p,q)}(f,\mathbf{x})&:=&\{(\mathbf{a},\mathbf{b},\mathbf{c})\in S^3 \,|\,\mathbf{a}:\mathbf{b}::^{(\mathbf{p})} \mathbf{c}:\mathbf{x}\\
&&~\text{and}~ f(\mathbf{a})^q\leq f(\mathbf{b})^q+f(\mathbf{c})^q\}
\end{eqnarray*}
\end{definition}

\begin{remark}
This definition seems rather different
than
that of Definition~\ref{def:root}. However, it is a clear generalization: indeed,  the analogical equation $a:b::^qc:d$ has a solution if and only if  $a^q\leq b^q+c^q$.  
\end{remark}

Hence  $R_S^{(\mathbf p,q)}(f,\mathbf{x})$ is the set of the triples $(\mathbf a,\mathbf b,\mathbf c)\in S^3$ that form an analogy with $\mathbf x$ in powers $\mathbf{p}= (p_1,...,p_n)$, and such that the analogical equation $f(\mathbf a):f(\mathbf b)::^qf(\mathbf c):y$ can be solved in $\mathbb{R}_+$. We denote the set of all such solutions $y$  by $\text{sol}_q(f(\mathbf a),f(\mathbf b),f(\mathbf c))$.

\begin{definition}
  Let $\mathbf{p}=(p_1,...,p_n)\in \mathbb{R}_+^n$, $S\subseteq
  \mathbb{R}_+^n$ and $q\in\mathbb{R}_+$. The $(\mathbf p;q)$-analogical extension of $S$ with respect to $f$, $E^{(\mathbf p;q)}_S(f)$, is defined as follows:
    \[E^{(\mathbf p;q)}_S(f):=\{\mathbf{x}\in\mathbb{R}_+^n\,|\,R_S(f,\mathbf{x})\not=\emptyset\}.\]
    When $\mathbf p$ and $q$ are clear from the context we shall simply write $E_S(f)$.
\end{definition}

The analogical extension of $S$ is the subset of $\mathbb{R}_+^{n}$ that can be valued using the \acrlong{aip}. If $\mathbf x\in E_S(f)$, we can assign an \textit{analogical value} to $\mathbf{x}$ (which will ideally coincide with $f(\mathbf x)$) using the \acrlong{aip}.

\subsection{Characterizing Analogy-Preserving Functions}
We now turn to the characterization of functions that preserve analogical proportions under the generalized setting.

\begin{definition}
    Let $\mathbf{p}$, $f$ and $S$ be as in the previous definition, and let $\mathbf{x}\in E_S(f)$. Define the {\it analogical value} $\overline{\mathbf{x}}_{S,f}^{(\mathbf{p},q)}$  of $\mathbf{x}$ w.r.t. 
    $\mathbf{p}$,q, $f$ and $S$, as 
    \[
    m_q\big(\text{sol}_q(f(\mathbf a),f(\mathbf b),f(\mathbf c))~| 
    ~(\mathbf{a},\mathbf{b},\mathbf{c})\in R^{(\mathbf{ p};q)}_S(f,\mathbf{x})\big)
\]
     where $m_q$ denotes the $q$-generalized mean. When clear from the context, we will simply write $\overline{\mathbf{x}}_{S,f}$.
     
\end{definition}

Following the same steps as in Section~\ref{sec:background}, we first seek to describe the set of $AP_{(\mathbf{p};q)}$ of  all $(\mathbf{p};q)$-analogy preserving functions, that is, functions $f$ such that for all $S\subseteq\mathbb{R}_+^n$, $\overline{x}_{S,f}=f(\mathbf{x})$, for all $\mathbf{x}\in E_S(f)$.

\begin{proposition}
\label{prop:AP}
If $f$ is continuous, then the following statements are equivalent.
\begin{enumerate}[leftmargin=20pt, itemsep=0.2em]
    \item $f\in AP_{(\mathbf p;q)}$
    \item $f$ maps analogies in powers $\mathbf{p}=(p_1,...,p_n)$ to analogies in power $q$.
\end{enumerate}
\end{proposition}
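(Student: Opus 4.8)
The plan is to prove the two directions separately, with the $(2)\Rightarrow(1)$ direction being essentially a bookkeeping exercise and the $(1)\Rightarrow(2)$ direction requiring the continuity hypothesis in an essential way.

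For $(2)\Rightarrow(1)$: assume $f$ sends every analogy in powers $\mathbf p$ to an analogy in power $q$. Fix $S\subseteq\mathbb R_+^n$ and $\mathbf x\in E_S(f)$. By definition of the analogical extension, $R_S^{(\mathbf p;q)}(f,\mathbf x)\neq\emptyset$, so pick any $(\mathbf a,\mathbf b,\mathbf c)$ in it. Since $\mathbf a:\mathbf b::^{(\mathbf p)}\mathbf c:\mathbf x$ holds, the hypothesis gives $f(\mathbf a):f(\mathbf b)::^q f(\mathbf c):f(\mathbf x)$, i.e.\ $f(\mathbf x)\in\mathrm{sol}_q(f(\mathbf a),f(\mathbf b),f(\mathbf c))$. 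Because the arithmetic analogy (to which any $::^q$-analogy reduces via the substitution $t\mapsto t^q$) has at most one solution, in fact $\mathrm{sol}_q(f(\mathbf a),f(\mathbf b),f(\mathbf c))=\{f(\mathbf x)\}$ for \emph{every} triple in the root. Hence the multiset over which the generalized mean $m_q$ is taken in the definition of $\overline{\mathbf x}_{S,f}$ is the constant $f(\mathbf x)$, and since $m_q$ of a constant equals that constant, $\overline{\mathbf x}_{S,f}=f(\mathbf x)$. As $S$ and $\mathbf x$ were arbitrary, $f\in AP_{(\mathbf p;q)}$.

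For $(1)\Rightarrow(2)$: suppose $f\in AP_{(\mathbf p;q)}$ and let $(\mathbf a,\mathbf b,\mathbf c,\mathbf d)$ be any quadruple with $\mathbf a:\mathbf b::^{(\mathbf p)}\mathbf c:\mathbf d$; we must show $f(\mathbf a):f(\mathbf b)::^q f(\mathbf c):f(\mathbf d)$, equivalently $f(\mathbf d)\in\mathrm{sol}_q(f(\mathbf a),f(\mathbf b),f(\mathbf c))$. The natural attempt is to take $S=\{\mathbf a,\mathbf b,\mathbf c\}$: then $\mathbf d\in E_S(f)$ provided the solvability condition $f(\mathbf a)^q\le f(\mathbf b)^q+f(\mathbf c)^q$ holds, and in that case $\overline{\mathbf d}_{S,f}$ is exactly the single solution of $f(\mathbf a):f(\mathbf b)::^q f(\mathbf c):y$, so $AP$-ness forces $f(\mathbf d)$ to equal it. The obstacle — and where continuity enters — is the case where the solvability inequality \emph{fails}, so that $(\mathbf a,\mathbf b,\mathbf c)\notin R_S(f,\mathbf d)$ and the simple argument says nothing. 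Here the idea is to perturb: by the componentwise structure of $::^{(\mathbf p)}$ (each coordinate satisfies $m_{p_i}(a_i,d_i)=m_{p_i}(b_i,c_i)$) one can produce a one-parameter family of quadruples $(\mathbf a_t,\mathbf b_t,\mathbf c_t,\mathbf d_t)$ in analogy, converging to $(\mathbf a,\mathbf b,\mathbf c,\mathbf d)$ as $t\to 0$, for which the strict solvability inequality $f(\mathbf a_t)^q< f(\mathbf b_t)^q+f(\mathbf c_t)^q$ does hold (e.g.\ by moving $\mathbf b,\mathbf c$ slightly upward while adjusting $\mathbf a,\mathbf d$ to preserve each coordinate's mean equation, exploiting monotonicity of $m_{p_i}$ in its arguments). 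For each such $t$ the previous argument applies, yielding $f(\mathbf a_t):f(\mathbf b_t)::^q f(\mathbf c_t):f(\mathbf d_t)$, i.e.\ $f(\mathbf a_t)^q - f(\mathbf b_t)^q - f(\mathbf c_t)^q + f(\mathbf d_t)^q=0$; passing $t\to 0$ and using continuity of $f$ (hence of $f^q$) gives the same identity for the limiting quadruple, which is precisely $f(\mathbf a):f(\mathbf b)::^q f(\mathbf c):f(\mathbf d)$.

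The main obstacle is thus the degenerate/unsolvable case in $(1)\Rightarrow(2)$: one must check that the perturbing family genuinely stays inside the set of $::^{(\mathbf p)}$-analogies (a coordinatewise verification using that $m_{p_i}$ is strictly increasing and continuous in each slot, so each coordinate equation can be re-solved after perturbation) and that it can be chosen to converge to the original quadruple. A secondary point worth stating explicitly is the uniqueness of solutions of $::^q$-equations — the reduction $t\mapsto t^q$ turns $f(\mathbf a):f(\mathbf b)::^q f(\mathbf c):y$ into the affine equation $f(\mathbf a)^q=f(\mathbf b)^q+f(\mathbf c)^q-y^q$ in $y^q$, which has the unique solution $y=(f(\mathbf b)^q+f(\mathbf c)^q-f(\mathbf a)^q)^{1/q}$ whenever the right-hand side is nonnegative — since both directions lean on it.
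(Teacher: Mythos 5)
Your $(2)\Rightarrow(1)$ direction and the solvable case of $(1)\Rightarrow(2)$ follow the paper's route. One point you gloss over in the solvable case: with $S=\{\mathbf a,\mathbf b,\mathbf c\}$, the root $R_S(f,\mathbf d)$ could a priori contain other orderings of the three sample points whose $\mathrm{sol}_q$-values differ from $\bigl(f(\mathbf b)^q+f(\mathbf c)^q-f(\mathbf a)^q\bigr)^{1/q}$, which would contaminate the $m_q$-average; the paper explicitly verifies that the only triples in the root are $(\mathbf a,\mathbf b,\mathbf c)$ and $(\mathbf a,\mathbf c,\mathbf b)$, both yielding the same solution. This is repairable, but it does need to be checked.

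The genuine gap is your handling of the unsolvable case in $(1)\Rightarrow(2)$. You propose a family of $::^{(\mathbf p)}$-analogies $(\mathbf a_t,\mathbf b_t,\mathbf c_t,\mathbf d_t)$ converging to $(\mathbf a,\mathbf b,\mathbf c,\mathbf d)$ with $f(\mathbf a_t)^q<f(\mathbf b_t)^q+f(\mathbf c_t)^q$. No such family can exist: failure of solvability is the \emph{strict} inequality $f(\mathbf a)^q>f(\mathbf b)^q+f(\mathbf c)^q$, which is an open condition precisely because $f$ is continuous. If $f(\mathbf a_t)^q-f(\mathbf b_t)^q-f(\mathbf c_t)^q<0$ for all $t>0$ and the quadruples converge to the original one, then passing to the limit gives $f(\mathbf a)^q-f(\mathbf b)^q-f(\mathbf c)^q\le 0$, contradicting the hypothesis of the case you are trying to treat. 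So a small perturbation can never flip the sign, and the local approximation strategy is dead on arrival. The paper instead shows the unsolvable case is \emph{vacuous} for continuous $f\in AP_{(\mathbf p;q)}$ by a global argument: after reducing to $\mathbf p=\mathbf 1$, $q=1$ via Lemma~\ref{l2}, it supposes a quadruple with $f(\mathbf b)-f(\mathbf a)+f(\mathbf c)<0$, sets $g(\mathbf x)=f(\mathbf b)-f(\mathbf a)+f(\mathbf x)$, and walks along the segment from $\mathbf a$ (where $g=f(\mathbf b)>0$) to $\mathbf c$ (where $g<0$); the intermediate value theorem locates a sign change, and the identity $g(\mathbf x)=f(\mathbf b-\mathbf a+\mathbf x)$, valid on the region where $g>0$ by the already-proved solvable case, then forces $f$ to take negative values near that point, contradicting $f\colon\mathbb R_+^n\to\mathbb R_+$. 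Your write-up needs to be repaired along these lines (a connectedness/IVT contradiction), not by local perturbation.
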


\begin{remark}
Observe that the underlying domain is the set of  nonnegative real numbers.
This might sound as a restriction but a good number of applications meet this condition.
A large field of application is image processing where the values on the gray channel (see MNIST data for instance~\cite{lecun_backpropagation_1989}) or RGB channels are non-negative real number (sometimes even natural numbers between 0 and 255).
Any processing of images involving numerical analogy is thus possible on this kind of data.
One may think about image completion, image reconstruction, etc.
Another example of field where representations are non-negative real numbers is, at least theoretically, word embedding models.
It has been shown that vectors representing words trained from some specific models are concentrated in an orthant of the space, which means that a rotation can make all components in the vectors non-negative~\cite{mimno_emnlp_2017}.
\end{remark}


\section{Analogy-based Regression}
\label{sec:reg}

This section explicitly describes the class $AP_{(\mathbf{p};q)}$ of all $(\mathbf{p};q)$-analogy preserving functions under some natural assumptions (see Subsection~\ref{Subsec:AP}), and {provides} performance guarantees that establish a tight connection between regression errors and distances to the class of analogy preserving functions in various functional spaces (Subsection~\ref{Subsec:errors}). This not only corrects the estimates and performance guarantees provided in \cite{CouceiroHPR18}, but also generalizes the frameworks of, {\it e.g.}, \cite{BounhasPR17a,BounhasP23,DBLP:journals/ijar/BounhasP24,CouceiroHPR17,CouceiroHPR18,CouceiroLMPR20}, to positive reals and to both classification and regression tasks. 

\subsection{Explicit Description of AP functions}\label{Subsec:AP}

The main result of this section is the following explicit description of continuous analogy preserving functions.
We will make use of the following auxiliary result that essentially states that we can restrict our quest to the case of arithmetic analogies ({\it i.e.,} for $p=1$). 

\begin{lemma}\label{l2}
Let $p'_j,q'\in\mathbb{R}_+$. Define the mappings $r:\mathbb{R}_+\to\mathbb{R}_+$ such that $r(x)=\sqrt[q']{x}$, and $s:\mathbb{R}_+^n\to\mathbb{R}_+^n$ such that  $s(x_1,\ldots,x_n)=(x_1^{p'_1},\ldots,x_n^{p'_n})$. Then the following statements are equivalent.
\begin{enumerate}
    \item $f: \mathbb{R}_+^n\to\mathbb{R}_+$ maps analogies in powers $\mathbf{p}=(p_1,\ldots,p_n)$ to an analogy in power $q$.
    \item $g=r\circ f\circ s$ maps analogies in powers ${\bf p\odot\bf p'}=(p_1'p_1,\ldots,p'_np_n)$ to an analogy in power $q'q$.
\end{enumerate}
\end{lemma}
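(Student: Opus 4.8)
The plan is to unwind the definitions of "maps analogies in powers $\mathbf{p}$ to an analogy in power $q$" on both sides and show that the two conditions are literally the same statement after a change of variables. Recall that $a_i : b_i ::^{p_i} c_i : d_i$ means $m_{p_i}(a_i, d_i) = m_{p_i}(b_i, c_i)$, which (for $p_i \neq 0$, with the $p_i = 0$ case handled by the limiting geometric mean) is equivalent to $a_i^{p_i} + d_i^{p_i} = b_i^{p_i} + c_i^{p_i}$. So "$f$ maps analogies in powers $\mathbf{p}$ to an analogy in power $q$" unfolds to: for every $\mathbf{a},\mathbf{b},\mathbf{c},\mathbf{d} \in \mathbb{R}_+^n$ with $a_i^{p_i} + d_i^{p_i} = b_i^{p_i} + c_i^{p_i}$ for all $i$, we have $f(\mathbf{a})^q + f(\mathbf{d})^q = f(\mathbf{b})^q + f(\mathbf{c})^q$.

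First I would set up the substitution. Given a quadruple $(\mathbf{a},\mathbf{b},\mathbf{c},\mathbf{d})$ on which we want to test the analogy-in-powers-$\mathbf{p}\odot\mathbf{p}'$ condition for $g = r \circ f \circ s$, write $\mathbf{a} = s(\boldsymbol\alpha)$, i.e. $\alpha_i = a_i^{1/p'_i}$, and similarly $\boldsymbol\beta, \boldsymbol\gamma, \boldsymbol\delta$ from $\mathbf{b},\mathbf{c},\mathbf{d}$. Since $s$ is a bijection on $\mathbb{R}_+^n$ (each coordinate map $x \mapsto x^{p'_i}$ is a bijection of $\mathbb{R}_+$ as $p'_i > 0$), this substitution is invertible, so ranging over all test quadruples for $g$ is the same as ranging over all $(\boldsymbol\alpha,\boldsymbol\beta,\boldsymbol\gamma,\boldsymbol\delta)$. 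Then I would check the two key equivalences: (i) the hypothesis $a_i^{p'_i p_i} + d_i^{p'_i p_i} = b_i^{p'_i p_i} + c_i^{p'_i p_i}$ becomes $\alpha_i^{p_i} + \delta_i^{p_i} = \beta_i^{p_i} + \gamma_i^{p_i}$, i.e. exactly $\boldsymbol\alpha : \boldsymbol\beta ::^{\mathbf{p}} \boldsymbol\gamma : \boldsymbol\delta$; and (ii) the conclusion $g(\mathbf{a})^{q'q} + g(\mathbf{d})^{q'q} = g(\mathbf{b})^{q'q} + g(\mathbf{c})^{q'q}$, using $g(\mathbf{a})^{q'q} = (f(s(\boldsymbol\alpha))^{1/q'})^{q'q} = f(\boldsymbol\alpha)^q$, becomes $f(\boldsymbol\alpha)^q + f(\boldsymbol\delta)^q = f(\boldsymbol\beta)^q + f(\boldsymbol\gamma)^q$, i.e. exactly the statement that the image quadruple is in analogy in power $q$. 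Stringing (i) and (ii) together gives that statement 2 holds for $g$ if and only if statement 1 holds for $f$, in both directions.

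The main thing to be careful about — really the only obstacle — is the boundary value $p_i = 0$ or $q = 0$, where the "sum of powers" reformulation of $m_p$ must be replaced by the geometric mean: $m_0(a,d) = m_0(b,c)$ means $ad = bc$, i.e. $\log a + \log d = \log b + \log c$. One should note that $p'_i p_i = 0$ exactly when $p_i = 0$ (since $p'_i > 0$), and that the substitution $\alpha_i = a_i^{1/p'_i}$ still gives $\log \alpha_i = \tfrac{1}{p'_i}\log a_i$, so the additive relation among logs is preserved up to the positive scalar $1/p'_i$ and the equivalence goes through unchanged; similarly for $q' q = 0 \iff q = 0$ on the output side. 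I would phrase the whole argument using $m_p$ directly (so that the $p=0$ case needs no separate treatment) by observing that for each coordinate the map $x \mapsto x^{t}$ with $t > 0$ commutes with $m_p$ in the sense $m_p(x_1^t,\dots) = (m_{pt^{-1}}(x_1,\dots))^{?}$ — more cleanly, I would just record the elementary identity $a_i : b_i ::^{p_i} c_i : d_i \iff a_i^{p'_i} : b_i^{p'_i} ::^{p_i/p'_i} c_i^{p'_i} : d_i^{p'_i}$ and the analogous identity for $r$ on the label side, and then the lemma is immediate. The remaining bookkeeping (that $r$ and $s$ are well-defined bijections, composition associativity) is routine and I would state it without detail.
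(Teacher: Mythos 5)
Your strategy is the same as the paper's: unfold the power-sum characterization of $::^{p}$, transport the test quadruple for $g$ through $s$ and the output values through $r$, and observe that hypothesis and conclusion for $g$ become literally the hypothesis and conclusion for $f$. (The paper proves only $1\Rightarrow 2$ and obtains the converse by applying that implication to the inverse exponents $\tilde p_j'=1/p_j'$, $\tilde q'=1/q'$; you get both directions at once from the bijectivity of $s$ and $r$ --- a cosmetic difference.) There is, however, one concrete slip: your substitution is written in the wrong direction. You set $\mathbf{a}=s(\boldsymbol\alpha)$, i.e.\ $\alpha_i=a_i^{1/p_i'}$, but then $\alpha_i^{p_i}=a_i^{p_i/p_i'}$, which is not $a_i^{p_i'p_i}$, so step (i) is false as stated; likewise in (ii) one has $g(\mathbf{a})=r\bigl(f(s(\mathbf{a}))\bigr)$, not $r\bigl(f(s(\boldsymbol\alpha))\bigr)$. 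The correct substitution is $\boldsymbol\alpha=s(\mathbf{a})$, i.e.\ $\alpha_i=a_i^{p_i'}$: then $\alpha_i^{p_i}=a_i^{p_i'p_i}$ and $g(\mathbf{a})^{q'q}=f(\boldsymbol\alpha)^{q}$, so both (i) and (ii) hold exactly as you wrote them and the proof goes through as in the paper. Your closing identity $a_i:b_i::^{p_i}c_i:d_i\iff a_i^{p_i'}:b_i^{p_i'}::^{p_i/p_i'}c_i^{p_i'}:d_i^{p_i'}$ is correct and already encodes the right direction, so the fix is purely notational; your extra care with the $p_i=0$ and $q=0$ boundary (via logarithms) is welcome and goes beyond what the paper records.
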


Lemma~\ref{l2} is entails the characterization of AP functions. 

\begin{theorem}\label{Thm:explicit description}
Suppose $f:\mathbb{R}_+^n\to\mathbb{R}_+$ is continuous. Then the following statements are equivalent.
\begin{enumerate}
\item $f\in AP_{(\mathbf{p};q)}$, for $\mathbf{p}=(p_1,...,p_n)$.
\item $f(x_1,...,x_n)=\left({\sum_{j=1}^n a_{j}x_j^{p_j}+b}\right)^{1/q}$, for some matrix $[a_{j}]\in M_{1\times n}(\mathbb{R}_+)$ and some scalar $b\in\mathbb{R}_+$.
\end{enumerate}

\end{theorem}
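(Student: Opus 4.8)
The plan is to use Lemma~\ref{l2} to reduce the general case to the purely arithmetic one ($\mathbf{p}=(1,\dots,1)$ and $q=1$), then solve a Cauchy-type functional equation, and finally transport the solution back.

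\emph{Step 1: Reduction to the arithmetic case.} Apply Lemma~\ref{l2} with $p'_j = 1/p_j$ and $q' = 1/q$, so that $g = r \circ f \circ s$ with $r(x) = x^{q}$ and $s(x_1,\dots,x_n) = (x_1^{1/p_1},\dots,x_n^{1/p_n})$. By Proposition~\ref{prop:AP} (continuity is preserved under these compositions since each coordinate map is a continuous bijection of $\mathbb{R}_+$), statement (1) is equivalent to $g$ mapping analogies in powers $(1,\dots,1)$ to an analogy in power $1$; that is, $g$ maps arithmetic analogies to arithmetic analogies. Concretely, whenever $\mathbf{a} + \mathbf{d} = \mathbf{b} + \mathbf{c}$ componentwise (with all entries in $\mathbb{R}_+$), we have $g(\mathbf{a}) + g(\mathbf{d}) = g(\mathbf{b}) + g(\mathbf{c})$.

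\emph{Step 2: Solving the functional equation for $g$.} The condition on $g$ says precisely that $g(\mathbf{b}) + g(\mathbf{c}) = g(\mathbf{a}) + g(\mathbf{d})$ whenever $\mathbf{b}+\mathbf{c} = \mathbf{a}+\mathbf{d}$; equivalently, $g(\mathbf{x}) + g(\mathbf{y}) = g(\mathbf{x}+\mathbf{h}) + g(\mathbf{y}-\mathbf{h})$ for all admissible shifts $\mathbf{h}$. Setting one argument to a fixed reference point and using that $\mathbb{R}_+^n$ is a convex cone, this forces $\mathbf{x} \mapsto g(\mathbf{x}) - g(\mathbf{0}^+)$ (taking limits to the boundary, or working first on the open cone and extending by continuity) to satisfy Cauchy's functional equation $G(\mathbf{x}) + G(\mathbf{y}) = G(\mathbf{x}+\mathbf{y})$ on $\mathbb{R}_+^n$. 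Since $g$ is continuous, so is $G$, and the only continuous solutions are linear: $G(\mathbf{x}) = \sum_{j=1}^n a_j x_j$. Because $g$ maps into $\mathbb{R}_+$ and is defined on the positive orthant, a sign/positivity argument gives $a_j \in \mathbb{R}_+$ and $b := g(\mathbf{0}^+) \in \mathbb{R}_+$, so $g(x_1,\dots,x_n) = \sum_{j=1}^n a_j x_j + b$ with all coefficients nonnegative.

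\emph{Step 3: Transporting back to $f$.} From $g = r \circ f \circ s$ we recover $f = r^{-1} \circ g \circ s^{-1}$, i.e.\ $f(x_1,\dots,x_n) = \big(g(x_1^{p_1},\dots,x_n^{p_n})\big)^{1/q} = \big(\sum_{j=1}^n a_j x_j^{p_j} + b\big)^{1/q}$, which is exactly statement (2). The converse direction (2)$\Rightarrow$(1) is the routine verification: if $f$ has this form and $\mathbf{a}:\mathbf{b}::^{\mathbf{p}}\mathbf{c}:\mathbf{d}$, then componentwise $m_{p_j}(a_j,d_j) = m_{p_j}(b_j,c_j)$ translates (after raising to the $p_j$) into $a_j^{p_j} + d_j^{p_j} = b_j^{p_j} + c_j^{p_j}$, whence $\sum_j a_j(a_j^{p_j}+d_j^{p_j}) = \sum_j a_j(b_j^{p_j}+c_j^{p_j})$, and adding $2b$ to both sides and taking $q$-th roots yields $f(\mathbf{a})^q + f(\mathbf{d})^q = f(\mathbf{b})^q + f(\mathbf{c})^q$, i.e.\ $f(\mathbf{a}):f(\mathbf{b})::^q f(\mathbf{c}):f(\mathbf{d})$.

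\emph{Main obstacle.} The delicate point is Step 2: carefully setting up the Cauchy equation on the \emph{positive} cone $\mathbb{R}_+^n$ rather than all of $\mathbb{R}^n$, handling the behavior at the boundary of the cone, and justifying that continuity alone (without boundedness or measurability hypotheses beyond continuity) pins down the linear solution — together with the positivity bookkeeping needed to conclude $a_j, b \geq 0$. One must also be slightly careful that the reduction in Step 1 via Lemma~\ref{l2} is genuinely an equivalence (both directions), which is why the lemma is stated as a biconditional; invoking it with exponents $1/p_j$ and $1/q$ is exactly what makes the arithmetic normal form attainable.
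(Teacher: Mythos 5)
Your proposal follows essentially the same route as the paper's own proof: invoke Proposition~\ref{prop:AP} to restate membership in $AP_{(\mathbf{p};q)}$ as mapping analogies to analogies, apply Lemma~\ref{l2} with exponents $1/p_j$ and $1/q$ to reduce to the arithmetic case, solve the resulting Cauchy functional equation on $\mathbb{R}_+^n$ using continuity (with the positivity bookkeeping for $a_j$ and $b$), transport back via $f(\mathbf{x})=\bigl(g(x_1^{p_1},\dots,x_n^{p_n})\bigr)^{1/q}$, and verify the converse by direct computation. The argument is correct, modulo a harmless notational clash in the converse step between the coefficients $a_j$ and the components of $\mathbf{a}$.
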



\begin{proof}
To prove the theorem, we use Proposition \ref{prop:AP}, and show that $f$ maps analogies in powers $\mathbf{p}$ to analogies in power $q$
 if and only if $f$ has the form given in 2.

Applying the previous Lemma, it suffices to show the result for $\mathbf{p}=(1,...,1)$ and $q=1$, which is an application of Cauchy's Functional Equation, which states that a continuous additive function must be linear.
\end{proof}


\subsection{Performance Guarantees}\label{Subsec:errors}
Inspired by the statement of Theorem~\ref{bound2} that establishes a correspondence between the distance of a Boolean classifier to the class of AP Boolean functions and the probability of that classifier to make classification errors, we seek an analogous (and correct!) result in the setting of analogy based regression.  
We will make use of background on functional spaces and measure theory and we refer the reader to \cite{taylor2008} for further background.

Recall first the notion of ``$q$-distance'' between  $x,y\in\mathbb{R}_+$, that is defined by
    \[
    d_q(x,y):=\left(|x^q-y^q|\right)^\frac{1}{q}.
    \]
Formally speaking, this $q$-distance is not a {distance}, since it does not fulfill the triangle inequality for $q< 1$. However, it constitutes a {semidistance} and, as we will see, it is the natural candidate for ``distance'' when dealing with analogies in power $q$.  The analogous {distance} for functional spaces can be defined as follows.  




  \begin{definition}
      Let $D\subseteq\mathbb{R}_+^n$, and consider $f,g:D\to \mathbb{R}_+$. Their uniform $q$-distance is defined by
      \[d_{q,\infty}(f,g):=\sup_{\mathbf x\in D}(d_q(f(\mathbf x),g(\mathbf x))).\]
  \end{definition}

To propose an analogue to the probablistic approach proposed in \cite{CouceiroHPR17}, we will need to assume that $D\in\mathcal{B}$, where $\mathcal{B}$ is the Borel $\sigma$-algebra over $\mathbb{R}_+^n$, and consider a probability space 
$(D,\mathcal{B}_D,\mathbb{P})$, where $\mathcal{B}_D$ consists of the elements of $\mathcal{B}$ that contain $D$ (the ideal of $\mathcal{B}$ generated by $D$), and $\mathbb{P}:\mathcal{B}_D\to [0,1]$ is a probability measure. For instance, if $D$ is finite, $D\in\mathcal{B}$ and we can naturally choose $\mathbb{P}$ to be the normalized counting measure ({\it i.e.,} uniform distribution).\footnote{However, even in the discrete case, this is not the only setting that could be of interest (for example, we could have a domain with a Poisson distribution), so we will aim to develop a general theory that works for any probability space.}
    This additional structure enables us to define our desired probabilistic distance.

    \begin{definition}
        Let $f:D\to\mathbb{R}_+$ be a Borel-measurable function. Its $q$-expected value is defined as
        \[\mathbb{E}_q(f):=\left(\int_D f^q \,d\mathbb{P}\right)^\frac{1}{q}.\]
         If $g:D\to \mathbb{R}_+$ is also Borel-measurable, then their probabilistic $q$-distance is given by
          \[dist_{q}(f,g):=\mathbb{E}_q(x\mapsto d_q(f(x),g(x))).\]
    \end{definition}

     Much like $d_{q,\infty}$, $dist$ is not necessarily a metric on $\mathbb{R}_+^D$, but it is a semimetric, and the most natural notion of distance when working with analogies in power $q$.

    \begin{remark} It is noteworthy that 
        $d_{q,\infty}(f,g)$ simplifies to $\left(||f^q-g^q||_\infty\right)^\frac{1}{q}$, whereas $dist_{q}(f,g)$ simplifies to $\left(\mathbb{E}(|f^q-g^q|)\right)^\frac{1}{q}$.
       Moreover, if $D=\{\mathbf d_1,...,\mathbf d_N\}$ and $\mathbb{P}$ is the normalized counting measure, $dist_{q}(f,g)$ simplifies to $\left(\sum_{i=1}^{N}|f(\mathbf d_i)^q-g(\mathbf d_i)^q| \right)^\frac{1}{q}$.
    \end{remark}

    Thus we have constructed two semimetrics, $d_{p,\infty}$ and $dist_p$, 
    that will act as our distances in the functional space $(\mathbb{R}_+)^D$. On the one hand, the semimetric $d_{p,\infty}$ is a uniform distance and will enable us to obtain worst-case results, that is,  upper bounds for the largest possible errors when using the AIP with functions $f\not\in AP$ (see Proposition \ref{p4} and Corollary \ref{c5}). On the other hand, $dist_p$ is a probabilistic distance and will enable us to obtain average-case results, {\it i.e.,}  upper bounds for the expected value of the errors introduced by when using the AIP with functions $f\not\in AP$ (see Proposition~\ref{p6}).

\begin{proposition}\label{p4}
Let  $\mathbf{a},\mathbf{b},\mathbf{c}, \mathbf{d}\in\mathbb{R}_+^n$, and $f:D \to \mathbb{R}_+$ such that  $d_{q,\infty}(f,AP_{(\mathbf{p};q)}) \leq \delta$, if $\mathbf{a}:\mathbf{b}::^{\mathbf{p}}\mathbf{c}:\mathbf{d},$ then  $d_q(f(\mathbf{x}),sol_{\mathbf{p}}(f(\mathbf{a}),f(\mathbf{b}),f(\mathbf{c})))\leq\sqrt[q]{4}\delta$.   
\end{proposition}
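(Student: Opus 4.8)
The plan is to exploit the characterization from Theorem~\ref{Thm:explicit description}: since $d_{q,\infty}(f,AP_{(\mathbf p;q)})\le\delta$, there is some $g\in AP_{(\mathbf p;q)}$ with $g(x_1,\dots,x_n)=\bigl(\sum_j a_j x_j^{p_j}+b\bigr)^{1/q}$ such that $d_q(f(\mathbf y),g(\mathbf y))\le\delta$, i.e. $|f(\mathbf y)^q-g(\mathbf y)^q|\le\delta^q$, for every $\mathbf y\in D$. The key observation is that $g$, being $AP$, maps the proportion $\mathbf a:\mathbf b::^{\mathbf p}\mathbf c:\mathbf d$ to an analogy in power $q$, which by the remark after the root definition means $g(\mathbf d)^q = g(\mathbf b)^q+g(\mathbf c)^q-g(\mathbf a)^q$; more directly, plugging the explicit form of $g$ and using that $\mathbf d_i^{p_i}=\mathbf b_i^{p_i}+\mathbf c_i^{p_i}-\mathbf a_i^{p_i}$ (the arithmetic-analogy condition that $\mathbf a:\mathbf b::^{\mathbf p}\mathbf c:\mathbf d$ reduces to after raising to the $p_i$) gives this identity by linearity. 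So $g$ predicts exactly, and the error of $f$ on $\mathbf d$ comes only from the four pointwise deviations between $f$ and $g$.

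The main step is then a triangle-type estimate \emph{in the $q$-th powers}. Write $u_{\mathbf y}=f(\mathbf y)^q$ and $v_{\mathbf y}=g(\mathbf y)^q$, so $|u_{\mathbf y}-v_{\mathbf y}|\le\delta^q$ for $\mathbf y\in\{\mathbf a,\mathbf b,\mathbf c,\mathbf d\}$. The quantity $\operatorname{sol}_{\mathbf p}(f(\mathbf a),f(\mathbf b),f(\mathbf c))$ — here I read $\operatorname{sol}$ with the power matching $q$, as in the text around $\text{sol}_q$ — is the solution $y$ of $f(\mathbf a):f(\mathbf b)::^q f(\mathbf c):y$, i.e. $y^q=u_{\mathbf b}+u_{\mathbf c}-u_{\mathbf a}$. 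Then
\[
d_q\bigl(f(\mathbf d),\operatorname{sol}_{\mathbf p}(f(\mathbf a),f(\mathbf b),f(\mathbf c))\bigr)^q
= \bigl|\,u_{\mathbf d}-(u_{\mathbf b}+u_{\mathbf c}-u_{\mathbf a})\,\bigr|.
\]
Adding and subtracting $v_{\mathbf d}-(v_{\mathbf b}+v_{\mathbf c}-v_{\mathbf a})$, which equals $0$ by the exactness of $g$, and applying the ordinary triangle inequality on $\mathbb{R}$ yields $|u_{\mathbf d}-v_{\mathbf d}|+|u_{\mathbf b}-v_{\mathbf b}|+|u_{\mathbf c}-v_{\mathbf c}|+|u_{\mathbf a}-v_{\mathbf a}|\le 4\delta^q$. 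Taking $q$-th roots gives $d_q\bigl(f(\mathbf d),\operatorname{sol}_{\mathbf p}(\dots)\bigr)\le\sqrt[q]{4}\,\delta$, as claimed. (Note $\mathbf x$ in the statement should be $\mathbf d$.)

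The subtle points I expect to be the real obstacles are two. First, one must be sure $\operatorname{sol}_{\mathbf p}(f(\mathbf a),f(\mathbf b),f(\mathbf c))$ actually exists, i.e. that $f(\mathbf a)^q\le f(\mathbf b)^q+f(\mathbf c)^q$; this need not follow from $\delta$-closeness alone unless one either assumes it (as the analogical-root definition implicitly does, by only considering triples in $R_S$) or argues that $u_{\mathbf b}+u_{\mathbf c}-u_{\mathbf a}\ge v_{\mathbf b}+v_{\mathbf c}-v_{\mathbf a}-3\delta^q = v_{\mathbf d}-3\delta^q\ge 0$ when $\delta$ is small relative to $g(\mathbf d)$ — worth a sentence, possibly a hypothesis. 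Second, the statement must be read with $\mathbf a:\mathbf b::^{\mathbf p}\mathbf c:\mathbf d$ interpreted componentwise (each coordinate an arithmetic analogy after the $p_i$-th power transform), which is exactly what makes the linear form of $g$ collapse the prediction to an identity; this is the place where Theorem~\ref{Thm:explicit description} and Lemma~\ref{l2} do the heavy lifting, reducing everything to the $\mathbf p=\mathbf 1$, $q=1$ affine case where the computation is transparent.
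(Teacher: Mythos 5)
Your proposal is correct and follows essentially the same route as the paper's own proof: pick a nearby $g\in AP_{(\mathbf p;q)}$, use that $g$ maps the proportion exactly (so $g(\mathbf d)^q=g(\mathbf b)^q+g(\mathbf c)^q-g(\mathbf a)^q$), insert and remove the $g^q$-terms, and apply the triangle inequality to get four summands each bounded by $\|f^q-g^q\|_\infty\le\delta^q$. Your two side remarks (the solvability of $\operatorname{sol}$ for $f$, and that $\mathbf x$ should read $\mathbf d$) flag genuine imprecisions in the statement that the paper's proof also glosses over, but they do not affect the core argument.
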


\begin{proof} The proof follows from  simple manipulations that leverage the triangle inequality for the absolute value. For further details see the Appendix.
\end{proof}

In other words, if a function is $\delta$-close to the class  $AP_{(\mathbf{p};q)}$, then the regression errors are at most $\sqrt[q]{4}\delta$. The same applies for the other semimetric $d_q.$

\begin{corollary}
    \label{c5} Let $f:D\to\mathbb{R}_+$ be such that 
    $$d_{q,\infty}(f,AP_{(\mathbf{p};q)})\leq \delta.$$ 
    Then, for every ${x\in D}$, $d_q(f(x),\overline{\mathbf{x}}_{S,f})\leq \sqrt[q]{4}\delta$.
\end{corollary}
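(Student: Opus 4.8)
The plan is to reduce the Corollary to Proposition~\ref{p4} by a simple averaging argument over the analogical root. First I would fix $\mathbf{x}\in D$; if $\mathbf{x}\notin E_S(f)$ the analogical value $\overline{\mathbf{x}}_{S,f}$ is undefined and there is nothing to prove, so assume $\mathbf{x}\in E_S(f)$, i.e. $R_S^{(\mathbf{p};q)}(f,\mathbf{x})\neq\emptyset$. For each triple $(\mathbf{a}_i,\mathbf{b}_i,\mathbf{c}_i)\in R_S^{(\mathbf{p};q)}(f,\mathbf{x})$ the defining condition $f(\mathbf{a}_i)^q\le f(\mathbf{b}_i)^q+f(\mathbf{c}_i)^q$ guarantees that the analogical equation $f(\mathbf{a}_i):f(\mathbf{b}_i)::^q f(\mathbf{c}_i):y$ has a (unique) solution $y_i\in\mathbb{R}_+$, and by definition $\overline{\mathbf{x}}_{S,f}=m_q(y_1,\dots,y_k)$, where $y_1,\dots,y_k$ enumerate these solutions as $(\mathbf{a}_i,\mathbf{b}_i,\mathbf{c}_i)$ ranges over the root.

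Next I would apply Proposition~\ref{p4} to each triple. Since $(\mathbf{a}_i,\mathbf{b}_i,\mathbf{c}_i)\in S^3\subseteq(\mathbb{R}_+^n)^3$ with $\mathbf{a}_i:\mathbf{b}_i::^{\mathbf{p}}\mathbf{c}_i:\mathbf{x}$ and $d_{q,\infty}(f,AP_{(\mathbf{p};q)})\le\delta$, taking $\mathbf{d}=\mathbf{x}$ in Proposition~\ref{p4} gives $d_q\big(f(\mathbf{x}),y_i\big)\le\sqrt[q]{4}\,\delta$, that is, $\big|f(\mathbf{x})^q-y_i^q\big|\le 4\delta^q$ for every $i$. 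It then remains to push this bound through the $q$-generalized mean. Unfolding $m_q$ and $d_q$, and using that $m_q(y_1,\dots,y_k)^q=\frac1k\sum_{i=1}^k y_i^q$ together with the triangle inequality for the absolute value,
\[
d_q\big(f(\mathbf{x}),\overline{\mathbf{x}}_{S,f}\big)^q=\Big|f(\mathbf{x})^q-\tfrac1k\textstyle\sum_{i=1}^k y_i^q\Big|=\Big|\tfrac1k\textstyle\sum_{i=1}^k\big(f(\mathbf{x})^q-y_i^q\big)\Big|\le\tfrac1k\textstyle\sum_{i=1}^k\big|f(\mathbf{x})^q-y_i^q\big|\le 4\delta^q,
\]
and taking $q$-th roots yields $d_q\big(f(\mathbf{x}),\overline{\mathbf{x}}_{S,f}\big)\le\sqrt[q]{4}\,\delta$, as claimed.

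I do not expect a genuine obstacle here: the whole content is the one-line convexity estimate above, which works precisely because the substitution $t\mapsto t^q$ turns the $q$-generalized mean into an ordinary arithmetic mean, so averaging does not degrade the constant $\sqrt[q]{4}$ coming from Proposition~\ref{p4}. The only points needing a little care are bookkeeping ones: noting that the statement is only meaningful for $\mathbf{x}\in E_S(f)$, and observing that the same computation applies verbatim whether the mean in $\overline{\mathbf{x}}_{S,f}$ is taken over the multiset of solutions indexed by the triples of the root or over the set of distinct solution values (in the latter case the uniform weights $1/k$ are simply replaced by the corresponding ones).
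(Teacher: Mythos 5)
Your proof is correct and follows essentially the same route as the paper: the paper's own proof is a one-liner invoking Proposition~\ref{p4} for each triple in the root and then the monotonicity of $m_q$, which is exactly the fact your averaging/triangle-inequality computation makes explicit (the $q$-power turns $m_q$ into an arithmetic mean, and a mean of values within $4\delta^q$ of $f(\mathbf{x})^q$ stays within $4\delta^q$). Your added bookkeeping about $\mathbf{x}\in E_S(f)$ and multiset versus set of solutions is a harmless refinement of the same argument.
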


\begin{proof}
    Follows from Proposition \ref{p4} and the monotonicity of the $q$-generalized mean $m_q$.
\end{proof}


\subsection{The Probabilistic View}
To obtain a probabilistic counterpart to this result, we will need to introduce the notion of regularity for sample sets.

\begin{definition}
 A sample set $S\subseteq D$ is {\it regular} with respect to a function $f$ if $E_S(f)=D$, and there is  $m\in\mathbb{N}$,  such that$$|R_S(f,\mathbf{x})|=m, ~\text{for every}~ {\mathbf{x}\in D}.$$ 
 \end{definition}

 Intuitively, a regular sample set is a set whose examples are representative and evenly distributed with respect to the feature space and output function. In such sets, nearby inputs lead to nearby outputs, and neighborhoods used for analogical reasoning are well-behaved: neither too sparse nor irregular. This regularity ensures that analogical inferences are stable and meaningful.

Practically, this assumption plays a role similar to the smoothness and density hypotheses in non-parametric regression: it guarantees that local relations in the data approximate the underlying functional dependencies. Regular sample sets thus provide the theoretical grounding ensuring the soundness and convergence of the analogical rule, and empirically correspond to well-sampled regions where analogical reasoning can be applied reliably.

To such a regular sample set, we can associate a mapping $S':D\to M_{3\times m}(S)$ that maps each $\mathbf{x}$ to a matrix 

\begin{displaymath}
    \begin{bmatrix} a^{(\mathbf{x})}_1 & a^{(\mathbf{x})}_2 & \cdots& a^{(\mathbf{x})}_m \\ b^{(\mathbf{x})}_1 & b^{(\mathbf{x})}_2 & \cdots & b^{(\mathbf{x})}_m \\ c^{(\mathbf{x})}_1 & c^{(\mathbf{x})}_2 & \cdots & c^{(\mathbf{x})}_m\end{bmatrix}
\end{displaymath}
where each column is a different element of $R_S(f,\mathbf x)$.

\begin{remark}\label{rem:def17}
    If $S$ is regular, then for every  $\mathbf{x}\in D$, we have 
    \[
    \overline{\mathbf{x}}_{S,f}=\left(\frac{1}{m}\sum_{j=1}^m \left(f(b^{(\mathbf{x})}_j)^q+f(c^{(\mathbf{x})}_j)^q-f(a^{(\mathbf{x})}_j)^q\right)\right)^\frac{1}{q} 
    \]

    Moreover, we have $d_q(\overline{\mathbf{x}}_{S,f},f(\mathbf{x}))$ is equal to
    {
    $$\left(\frac{1}{m}\left|\sum_{j=1}^m \left(f(x)^q+f(a^{(\mathbf{x})}_j)^q-f(b^{(\mathbf{x})}_j)^q-f(c^{(\mathbf{x})}_j)^q\right)\right|\right)^\frac{1}{q}.$$
    }
\end{remark}

The notion of regular sample will enable us to get probabilistic analogue of Proposition \ref{p4}.

\begin{proposition}
    \label{p6}
    Let  $S\subseteq D$ be a  regular sample  set with associated map $S'$, and let  $f,g:D\to\mathbb{R}_+$  such that $g\in AP_{(p_1,...,p_n;q)}$ and $dist_p(f,g)\leq\delta$. Suppose that  $$\mathbb{E}\left(\mathbf{x}\mapsto |f^q-g^q|(S'(\mathbf{x})_{ji})\right)=\mathbb{E}(|f^q-g^q|),$$ for every ${\mathbf{x}\in D}$, ${i\in\{1,...,m\},j\in\{1,2,3\}}$\footnote{Losely speaking, this means that sampling is independent of the $q$-distance between $f$ and $g$.}. Then $$dist_q(\mathbf{x}\mapsto\overline{\mathbf{x}}_{S,f}, f)\leq \sqrt[q]{4}\delta.$$
\end{proposition}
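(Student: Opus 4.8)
The plan is to peel back the definitions, turn the claim into a single pointwise inequality via Remark~\ref{rem:def17}, kill the ``correct part'' of the prediction by subtracting a term built from the AP function $g$, and then integrate. Since $d_q(u,v)^q=|u^q-v^q|$, unwinding $dist_q$ and $\mathbb{E}_q$ gives
\[
dist_q\big(\mathbf{x}\mapsto\overline{\mathbf{x}}_{S,f},\,f\big)^q=\mathbb{E}\big(\mathbf{x}\mapsto|(\overline{\mathbf{x}}_{S,f})^q-f(\mathbf{x})^q|\big),
\]
so it is enough to bound the right-hand expectation by $4\delta^q$ and take $q$-th roots.

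Fix $\mathbf{x}\in D$. By regularity, $\mathbf{x}\in E_S(f)$ and $R_S(f,\mathbf{x})$ has exactly $m$ elements, namely the columns $(\mathbf{a}_i^{(\mathbf{x})},\mathbf{b}_i^{(\mathbf{x})},\mathbf{c}_i^{(\mathbf{x})})$, $i=1,\dots,m$, of $S'(\mathbf{x})$. Using Remark~\ref{rem:def17} and the fact that $f(\mathbf{x})^q$ does not depend on $i$, I would write $(\overline{\mathbf{x}}_{S,f})^q-f(\mathbf{x})^q=\frac1m\sum_{i=1}^m\big(f(\mathbf{b}_i^{(\mathbf{x})})^q+f(\mathbf{c}_i^{(\mathbf{x})})^q-f(\mathbf{a}_i^{(\mathbf{x})})^q-f(\mathbf{x})^q\big)$. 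Since each column lies in $R_S(f,\mathbf{x})$ we have $\mathbf{a}_i^{(\mathbf{x})}:\mathbf{b}_i^{(\mathbf{x})}::^{\mathbf{p}}\mathbf{c}_i^{(\mathbf{x})}:\mathbf{x}$; as $g\in AP_{(\mathbf{p};q)}$, Proposition~\ref{prop:AP} says $g$ sends this $\mathbf{p}$-analogy to a $q$-analogy, which by definition of $::^q$ means $g(\mathbf{a}_i^{(\mathbf{x})})^q+g(\mathbf{x})^q=g(\mathbf{b}_i^{(\mathbf{x})})^q+g(\mathbf{c}_i^{(\mathbf{x})})^q$, i.e. the summand with $g$ in place of $f$ vanishes. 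Subtracting this zero inside the sum and applying the triangle inequality for $|\cdot|$ yields the pointwise estimate
\[
|(\overline{\mathbf{x}}_{S,f})^q-f(\mathbf{x})^q|\le\frac1m\sum_{i=1}^m\Big(|f^q-g^q|(\mathbf{a}_i^{(\mathbf{x})})+|f^q-g^q|(\mathbf{b}_i^{(\mathbf{x})})+|f^q-g^q|(\mathbf{c}_i^{(\mathbf{x})})\Big)+|f^q-g^q|(\mathbf{x}).
\]

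Next I would integrate this inequality over $\mathbf{x}\sim\mathbb{P}$. By linearity of expectation, the hypothesis $\mathbb{E}\big(\mathbf{x}\mapsto|f^q-g^q|(S'(\mathbf{x})_{ji})\big)=\mathbb{E}(|f^q-g^q|)$ for all $i,j$, and the trivial identity $\mathbb{E}\big(\mathbf{x}\mapsto|f^q-g^q|(\mathbf{x})\big)=\mathbb{E}(|f^q-g^q|)$, each of the $3m$ sampled contributions and the last term integrates to $\mathbb{E}(|f^q-g^q|)$; averaging gives $\mathbb{E}\big(\mathbf{x}\mapsto|(\overline{\mathbf{x}}_{S,f})^q-f(\mathbf{x})^q|\big)\le 4\,\mathbb{E}(|f^q-g^q|)=4\,dist_q(f,g)^q\le 4\delta^q$. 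Taking $q$-th roots gives exactly $dist_q(\mathbf{x}\mapsto\overline{\mathbf{x}}_{S,f},f)\le\sqrt[q]{4}\,\delta$.

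The step I expect to need the most care is not quantitative but measure-theoretic: for the left-hand expectation even to be defined, $\mathbf{x}\mapsto(\overline{\mathbf{x}}_{S,f})^q$ must be Borel-measurable on $D$. Since $S$ is finite, $S'$ takes only finitely many values in $M_{3\times m}(S)$, so this reduces to the level sets $(S')^{-1}(\cdot)$ being Borel; this is in any case implicit in the hypothesis (otherwise the quantities $\mathbb{E}(\mathbf{x}\mapsto|f^q-g^q|(S'(\mathbf{x})_{ji}))$ would be meaningless), and under it $(\overline{\mathbf{x}}_{S,f})^q$ is a finite combination of compositions of the measurable function $f^q$ with measurable maps; finiteness of $\mathbb{E}(|f^q-g^q|)$, needed to rule out $\infty-\infty$, follows from $dist_q(f,g)\le\delta<\infty$. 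Everything else is routine triangle-inequality bookkeeping, and no slack is lost: the constant $4=3+1$ merely counts the three vertices $\mathbf{a}_i^{(\mathbf{x})},\mathbf{b}_i^{(\mathbf{x})},\mathbf{c}_i^{(\mathbf{x})}$ of the analogy plus $\mathbf{x}$ itself, each contributing one copy of $dist_q(f,g)^q$.
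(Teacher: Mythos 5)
Your proposal is correct and follows essentially the same route as the paper's proof: unfold $dist_q$ via Remark~\ref{rem:def17}, insert the vanishing $g$-analogy identity for each column of $S'(\mathbf{x})$, apply the triangle inequality to get four copies of $|f^q-g^q|$, and use the sampling-independence hypothesis plus linearity of $\mathbb{E}$ to bound each by $\delta^q$. The only differences are cosmetic (you do the triangle inequality pointwise before integrating, the paper integrates first) plus your added measurability remark, which the paper leaves implicit.
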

\begin{proof}
    This proof is a matter of unfolding the definitions, using Remark 15, and then applying the triangle inequality (as in Proposition 16) and the linearity of the expected value to the resulting four summands. 
    The detailed proof is given in the extended version. 
\end{proof}
The assumption that $S$ is regular may seem quite  strong. However, the proof of Proposition \ref{p6} shows that we do not necessarily need $S$ to be regular, as long as we can find a suitable value for $m$ to construct a $S':D\to M_{3\times m}(S)$. If we define $  \overline{\mathbf{x}}_{S',f}$ as {\small
$$m_q(sol_q(f(S'(\mathbf{x})_{1\,i},f(S'(\mathbf{x})_{2\,i}),f(S'(\mathbf{x})_{3\,i}))\,|\,i\in{\{1,...,m\}})$$} (which is in general different from $\overline {\mathbf{x}}_{S,f}$ if $S$ is not regular, but still obtainable from $S$ by applying the AIP), Proposition \ref{p6} is still valid if we substitute $\overline {\mathbf{x}}_{S',f}$ for $\overline {\mathbf{x}}_{S,f}$.


    Proposition \ref{p6} also assumes that $$\mathbb{E}\left(\mathbf{x}\mapsto |f^q-g^q|(S'(\mathbf{x})_{ji})\right)=\mathbb{E}(|f^q-g^q|),$$ for every ${\mathbf{x}\in D}$, ${i\in\{1,...,m\},j\in\{1,2,3\}}$.
    In other words, the way that  $S'$ is used to find $\overline{ \mathbf{x}}_{S',f}$ does not depend on the distance between $f$ and $g$.
    Since we are establishing an upper bound, this assumption can be relaxed to 
    $$\mathbb{E}\left(\mathbf{x}\mapsto |f^q-g^q|(S'(\mathbf{x})_{ji})\right)\leq\mathbb{E}(\mathbf{x}\mapsto|f^q-g^q|(\mathbf{x})),$$ which essentially states that the sample points we are choosing are more likely to be points where $f$ is closer to $g$, and the proof would remain the same.
    %

\section{Boolean Case Revisited}
\label{sec:classification}



We can now attempt to apply the previous results by revisiting  the Boolean case. It has been shown that (for the two models of analogy $M$ and $K$), the analogy preserving functions are precisely the affine functions $\mathbb{B}^n\to\mathbb{B}$ (see Theorem~\ref{AP_is_L} that was obtained in \cite{CouceiroHPR17}).

However, $\mathbb{B}$ is not a subring of $\mathbb{R}$, so some care is required when transferring these results. To illustrate this, take for example the triple $(0,1,1)$. If we see it in $\mathbb{R}$, $sol_1(0,1,1)=2$. But if we see it in $\mathbb{B}$, with model $K$ (see \ref{subsection:to:general:cases}), $sol_1(0,1,1)=0$. Moreover, if our model is $M$, this triple is not even solvable. With this in mind, for the sake of clarity, let us denote the sum in $\mathbb{B}$ as $\oplus$ and the sum in $\mathbb{R}$ as $+$.

Another problem we have to deal with is the fact that in $\mathbb{B}$, $\overline {\mathbf x}_{S,f}\in\mathbb{B}$ was originally defined as a mode. To make this notion compatible with the framework hitherto established, we will need to modify it slightly. Denote as $\iota:\mathbb{B}\to\mathbb{R}$ the set inclusion. For a regular sample $S\subseteq\mathbb{B}^n$, with corresponding map $S':D\to M_{3\times m}(S)$, we can define 
\[
    \tilde{\mathbf x}_{S,f}:=\frac{1}{m}\sum_{i=1}^m \iota \left(\bigoplus_{j=1}^3 f(S'(\mathbf x)_{ji})\right)\in[0,1].
\]
Note that
$\displaystyle\bigoplus_{j=1}^3 f(S'(\mathbf x)_{ji}) = sol(S'(\mathbf x)_{1i},S'(\mathbf x)_{2i},S'(\mathbf x)_{3i}),$
and that $\overline {\mathbf x}_{S,f}\in\mathbb{B}$ can easily be recovered from $\tilde{\mathbf x}_{S,f}$. Moreover, if $f\in AP$, then $\tilde {\mathbf x}_{S,f}\in\{0,1\}$ and $$\tilde {\mathbf x}_{S,f}=\iota(\overline{\mathbf x}_{S,f})=\iota f(\mathbf x).$$ However, in general $f\not\in AP$, and thus $|\tilde {\mathbf x}_{S,f}-\iota(\overline{\mathbf x}_{S,f})|$ can be thought of as a measure of how confident the AIP classifies $\mathbf x$.
By observing that, for $a,b\in\mathbb{B}$, 
$$\iota(a\oplus b)=|\iota(a)-\iota(b)|,$$ and the inequality $$\iota(a_1\oplus\cdots\oplus a_n)\leq\iota(a_1)+...+\iota(a_n),$$ we can easily adapt the proof of Proposition \ref{p6} to get the following (analogous) result.

 \begin{proposition} Let $D\subseteq \mathbb{B}^n$, $f,g: D\to\mathbb{B}$ such that $g\in AP=\mathcal{L}$ and $\mathbb{E}(|\iota f-\iota g|)\leq \delta$. Suppose $S$ is regular, and for all ${x\in D}$, ${i\in\{1,...,m\}}$, $j\in\{1,2,3\}$, 
$$\mathbb{E}\left(x\mapsto |\iota f-\iota g|(S'(x)_{ji})\right)=\mathbb{E}(|\iota f-\iota g|).$$
Then $\mathbb{E}(|\mathbf x\mapsto \tilde{\mathbf x}_{S,f}-\iota f|)\leq4\delta.$
\end{proposition}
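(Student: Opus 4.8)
The strategy is to mirror the proof of Proposition~\ref{p6}, translating each step from the $q$-generalized-mean setting (with $q=1$) into the Boolean one via the inclusion $\iota:\mathbb{B}\to\mathbb{R}$. First I would fix $\mathbf{x}\in D$ and unfold the definition of $\tilde{\mathbf{x}}_{S,f}$, writing
\[
\tilde{\mathbf{x}}_{S,f}-\iota f(\mathbf{x})
=\frac{1}{m}\sum_{i=1}^m\Big(\iota\big(\textstyle\bigoplus_{j=1}^3 f(S'(\mathbf{x})_{ji})\big)-\iota f(\mathbf{x})\Big).
\]
Since $S$ is regular and $g\in AP=\mathcal{L}$, for each column $i$ the four instances $S'(\mathbf{x})_{1i},S'(\mathbf{x})_{2i},S'(\mathbf{x})_{3i},\mathbf{x}$ form an analogy in every attribute, so $g$ maps them to an analogy, i.e.\ $g(\mathbf{x})=\bigoplus_{j=1}^3 g(S'(\mathbf{x})_{ji})$, equivalently $\iota g(\mathbf{x})=\iota\big(\bigoplus_j g(S'(\mathbf{x})_{ji})\big)$. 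Subtracting this exact identity inside each summand lets me replace $\iota f(\mathbf{x})$ and introduce the discrepancies $|\iota f-\iota g|$ at the four points $\mathbf{x},S'(\mathbf{x})_{1i},S'(\mathbf{x})_{2i},S'(\mathbf{x})_{3i}$.

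The key algebraic inputs are the two observations stated just before the proposition: $\iota(a\oplus b)=|\iota(a)-\iota(b)|$ and $\iota(a_1\oplus\cdots\oplus a_n)\le\iota(a_1)+\cdots+\iota(a_n)$. Using the first to rewrite $\iota(a\oplus b)-\iota(c\oplus d)$ in terms of absolute values and then the triangle inequality for $|\cdot|$ in $\mathbb{R}$, one bounds
\[
\Big|\iota\big(\textstyle\bigoplus_{j=1}^3 f(S'(\mathbf{x})_{ji})\big)-\iota f(\mathbf{x})\Big|
\le |\iota f-\iota g|(\mathbf{x})+\sum_{j=1}^3|\iota f-\iota g|(S'(\mathbf{x})_{ji}),
\]
exactly as the four summands appear in the real-valued Proposition~\ref{p4}/\ref{p6} argument (this is the Boolean instance of the "$f(x)^q+f(a)^q-f(b)^q-f(c)^q$" expansion in Remark~\ref{rem:def17}). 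Averaging over $i$ and taking $\mathbb{E}$ over $\mathbf{x}\in D$, the linearity of expectation splits the bound into four expectations; the regularity/sampling hypothesis $\mathbb{E}\big(\mathbf{x}\mapsto|\iota f-\iota g|(S'(\mathbf{x})_{ji})\big)=\mathbb{E}(|\iota f-\iota g|)$ identifies each of the three "sample" terms with $\mathbb{E}(|\iota f-\iota g|)\le\delta$, and the fourth term is $\mathbb{E}(|\iota f-\iota g|)\le\delta$ directly. This yields $\mathbb{E}\big(|\mathbf{x}\mapsto\tilde{\mathbf{x}}_{S,f}-\iota f|\big)\le 4\delta$.

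The main obstacle, and the point needing the most care, is justifying the exact identity $\iota g(\mathbf{x})=\iota\big(\bigoplus_{j=1}^3 g(S'(\mathbf{x})_{ji})\big)$ for every column $i$ — i.e.\ that affineness of $g$ over the two-element field really does force $g$ to send the analogy $(S'(\mathbf{x})_{1i},S'(\mathbf{x})_{2i},S'(\mathbf{x})_{3i},\mathbf{x})$ to a Boolean analogy with solution $\bigoplus_j g(S'(\mathbf{x})_{ji})$, and that this solution coincides with the value the AIP actually assigns in each column (so that $\tilde{\mathbf{x}}_{S,f}$ is genuinely the average of the $\iota(\bigoplus_j f(\cdot))$). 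This is precisely Theorem~\ref{AP_is_L} together with the remark that $\bigoplus_{j=1}^3 f(S'(\mathbf{x})_{ji})=sol(S'(\mathbf{x})_{1i},S'(\mathbf{x})_{2i},S'(\mathbf{x})_{3i})$ noted before the statement, so it is available; the subtlety is only in checking that the solvability condition built into $R_S(f,\mathbf{x})$ is compatible with how $\oplus$ is being used here (the discrepancy flagged by the $(0,1,1)$ example). Everything else is the routine triangle-inequality bookkeeping already carried out in Proposition~\ref{p6}, so I would state it briefly and refer to that proof.
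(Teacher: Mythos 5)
Your proposal is correct and follows exactly the route the paper intends: the paper's own ``proof'' is the one-line remark that the identities $\iota(a\oplus b)=|\iota(a)-\iota(b)|$ and $\iota(a_1\oplus\cdots\oplus a_n)\leq\iota(a_1)+\cdots+\iota(a_n)$ allow one to adapt the proof of Proposition~\ref{p6}, and your write-up supplies precisely that adaptation (inserting the exact identity $g(\mathbf{x})=\bigoplus_j g(S'(\mathbf{x})_{ji})$ for affine $g$, expanding via the two $\iota$ identities, and closing with the triangle inequality, linearity of $\mathbb{E}$, and the sampling hypothesis). Your flagged subtlety about reconciling $\oplus$ with the solvability convention is the same caveat the paper itself raises via the $(0,1,1)$ example, and it is resolved the same way, by defining $\tilde{\mathbf{x}}_{S,f}$ directly through $\oplus$.
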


\begin{remark}
    If the probability measure in use is the normalized counting measure (uniform distribution), $\mathbb{E}(|\iota f-\iota g|)$ is just the (normalized) Hamming distance between $f$ and $g$, used in \cite{CouceiroHPR18}. 
\end{remark}

The remarks following Proposition \ref{p6} also apply here.

\section{Conclusion}
We have revisited analogical inference from a foundational and unifying perspective. Our analysis reveals that previous generalization bounds for analogical classifiers fail even in the Boolean setting. To address this drawback, we explored a parameterized framework based on generalized means, allowing analogical inference to extend naturally to regression tasks. We characterized analogy-preserving functions in this setting and derived both worst-case and average-case guarantees based on functional distances. This framework bridges the gap between Boolean analogical classification and continuous regression, offering a general theory of analogical reasoning across domains. 

While the focus of this paper is theoretical, aiming to invalidate incorrect claims and establish a sound unifying framework, it also sets the stage for empirical evaluation. Our formulation connects with SOTA analogical models (e.g., \cite{BounhasP24}) yet extends them to real-valued outputs through a principled analogical regression rule with formal guarantees (Prop. 16–20). This provides the first foundation for implementing analogical regression against methods such as kernel or k-NN regression.

Our framework yields implementable analogical regression rules with closed-form prediction. It also identifies the class of analogy-preserving functions (Th. 12), enabling learnable parametric models. These results provide both a unified theoretical basis and concrete algorithmic formulations for analogical reasoning in regression tasks. These highlighted new avenues for future work. As mentioned earlier, one of the limitations is the underlying assumption that the analogy model is decomposable with respect to each dimension. Despite not being of utmost importance for the current framework, it prevents taking into account synergetic relations between dimensions and thus a full account of emerging concepts in the analogical reasoning process. Also, we have considered a unified model of analogies based on generalized means. However, it would be worthwhile to investigate other recent variants, {\it e.g.,} based on generalized norms \cite{PradeR24}. 

From a practical perspective, future work will explore empirical evaluation, integration with neural architectures, and extensions to structured and probabilistic analogical forms. 

\newpage
\section{Acknowledgments}
This work was supported by ANR-22-CE23-0002 (ERIANA) and ANR-22-CE23-0023 (AT2TA). Francisco Cunha received support from  Calouste Gulbenkian Foundation under ``Programa Novos Talentos Científicos'' Program.
Yves Lepage was supported by a research grant from Waseda University, type Tokutei-kadai, number 2025R-034, entitled ``Numerical analogy: mathematical definitions and applications to machine learning''.


\bibliography{aaai2026}

 
\section{Appendix}

In this appendix we provide the proofs for the main results of the paper.

\subsection{Proof of Proposition 9}\label{Proof:Prop8}

To show that the condition is sufficient ({\it i.e.,}
$1\,\Rightarrow$\,2), suppose that $f\in AP_{(\mathbf{p};q)}$. Let $\mathbf{x},\mathbf{y},\mathbf{z}, \mathbf{w}\in\mathbb{R}_+^n$ be such that $\mathbf{x}:\mathbf{z}::^{\mathbf{p}}\mathbf{w}:\mathbf{y}$. We want to show that $$f(\mathbf{x}):f(\mathbf{z})::^qf(\mathbf{w}):f(\mathbf{y}).$$
 Suppose first that $f(\mathbf{z})^q-f(\mathbf{x})^q+f(\mathbf{w})^q>0$. Take $T=\{\mathbf{x},\mathbf{z},\mathbf{w}\}$. Then $(\mathbf{x},\mathbf{z},\mathbf{w})\in R_T(f,\mathbf{y})$. 
 
 Since $(\mathbf{x},\mathbf{z},\mathbf{w})\in R_T(f,\mathbf{y})$, it is not difficult to see that we also have
$(\mathbf{x},\mathbf{w},\mathbf{z})\in R_T(f,\mathbf{y})$.
 In fact, these are the only triples in $R_T(f,\mathbf{y})$.
 To verify this, suppose first that $(\mathbf{z},\mathbf{x},\mathbf{w})\in R_T(f,\mathbf{y})$. Then, for every $j=1,\ldots,n$, 
 \begin{eqnarray*}
     z_j^{p_j}+y_j^{p_j}&=&x_j^{p_j}+w_j^{p_j}\\&\Rightarrow& 2z_j^{p_j}+y_j^{p_j}~=~x_j^{p_j}+w_j^{p_j}+z_j^{p_j}\\
     &\Rightarrow& 2z_j^{p_j}+y_j^{p_j}~=~2x_j^{p_j}+y_j^{p_j}\\
     &\Rightarrow& z_j^{p_j}~=~x_j^{p_j}.
 \end{eqnarray*}
 Hence, $\mathbf{x}=\mathbf{z}$ and thus $(\mathbf{z},\mathbf{x},\mathbf{w})=(\mathbf{x},\mathbf{z},\mathbf{w})$, which was already in $R_T(f,\mathbf{y})$. 
 Using a similar argument, one can easily show that for any of the remaining three permutations $\sigma$ on $\{\mathbf{x},\mathbf{z},\mathbf{w}\}$, we have $(\sigma(\mathbf{x}),\sigma(\mathbf{z}),\sigma(\mathbf{w}))\in
 \{(\mathbf{x},\mathbf{z},\mathbf{w}),(\mathbf{x},\mathbf{w},\mathbf{z})\}$.
 We thus conclude that 
 $$R_T(f,\mathbf{y})=\{(\mathbf{x},\mathbf{z},\mathbf{w}),(\mathbf{x},\mathbf{w},\mathbf{z})\},$$ and hence, $\overline{\mathbf{y}}_{T,f}$ is simply
{\small
\begin{equation*}
m_q\left(\sqrt[q]{f(z)^q+f(w)^q-f(x)^q},\sqrt[q]{f(w)^q+f(z)^q-f(x)^q}\right),
\end{equation*}
}
that is, 
$\sqrt[q]{f(z)^q+f(w)^q-f(x)^q}.$     

Since $f\in AP_{(\mathbf{p};q)}$, we know that $f(\mathbf{y})=\overline{\mathbf{y}}_{T,f}$, and hence: 
\begin{eqnarray*}
f(x)^q+f(y)^q&=&f(x)^q+(\overline{y}_{T,f})^q\\
&=&f(x)^q+\Big(\sqrt[q]{f(z)^q+f(w)^q-f(x)^q}\Big)^q\\
&=&f(z)^q+f(w)^q,
\end{eqnarray*}
as desired.

Now, suppose that $f(\mathbf{z})^q-f(\mathbf{x})^q+f(\mathbf{w})^q\leq 0$. As we will see, this can never occur for $f\in AP_{(\mathbf{p};q)}$ continuously differentiable. From Lemma 11, we can assume w.l.o.g. $p_j=1,~\forall_j$.\footnote{The case $p=0$ can be dealt with as in \cite{LepageC24} by relying on exponential and logarithmic transformations.} 
 
 For the sake of a contradiction, suppose that there are $\mathbf{a},\mathbf{b},\mathbf{c},\mathbf{d}\in\mathbb{R}_+^n$ such that $\mathbf{b}-\mathbf{a}=\mathbf{d}-\mathbf{c}$ and $f(\mathbf{b})-f(\mathbf{a})+f(\mathbf{c})<0$.
     Let $g: \mathbb{R}_+ \to \mathbb{R}$ be given by 
   $g(\mathbf{x}) =f(\mathbf{b})-f(\mathbf{a})+f(\mathbf{x})$.
        Notice that 
        \begin{equation*}\label{eq:*}
        ((\mathbf{b}-\mathbf{a}+\mathbf{x})\in\mathbb{R}_+^n ~\text{and}~ g(\mathbf{x})>0 )\Rightarrow g(\mathbf{x})=f(\mathbf{b}-\mathbf{a}+\mathbf{x}).
        \end{equation*}
        This is due to what we have just shown.

       On the one hand, we trivially have $g(\mathbf{a})=f(\mathbf{b})>0$. On the other hand, $g(\mathbf{c})=f(\mathbf{b})-f(\mathbf{a})+f(\mathbf{c})<0$ by hypothesis. Define $\gamma: [0,1]\to\mathbb{R}_+^n$ by $\gamma(t) =t\mathbf{c}+(1-t)\mathbf{a}$ ($\gamma$ is well defined, since $\mathbb{R}_+^n$ is convex). Moreover, $g\circ\gamma$ is continuous since $f$ and $\gamma$ are.
        By the intermediate value theorem, there is a value $t_0\in ]0,1[ $ such that $g\circ\gamma$ changes sign (from positive to negative) at $t_0$, {\it i.e.,} $g(\gamma(t_0))=0$.

        Since $\gamma(t_0)$ lies in the line segment between $\mathbf{a}$ and $\mathbf{c}$, we also have that $(\mathbf{b}-\mathbf{a}+\gamma(t_0))$ lies in the line segment between $\mathbf{b}-\mathbf{a}+\mathbf{a}=\mathbf{b}$ and $\mathbf{b}-\mathbf{a}+\mathbf{c}=\mathbf{d}$ (by hypothesis). Since $\mathbf{b}$ and $\mathbf{d}$ are both in $\mathbb{R}_+^n$  and $\mathbb{R}_+^n$ is convex, it follows that $(\mathbf{b}-a+\gamma(t_0))$ is in the interior of $\mathbb{R}_+^n$. 
        
        Since $f$ is continuous at $\mathbf{a}-\mathbf{b}+\gamma(t_0)\in\mathbb{R}_+^n$, we must have $f(\mathbf{a}-\mathbf{b}+\gamma(t_0))=g(\gamma(t_0))=0$. Moreover, since $f\circ\gamma$ is continuously differentiable, $f$ also changes sign at   
        $(\mathbf{b}-\mathbf{a}+\gamma(t))\in\mathbb{R}_+^n$.


        ($2\,\Rightarrow\,1$) Suppose that $f$ maps analogies in powers $\mathbf{p}=(p_1,...,p_n)$ to analogies in power $q$. Let $S\subseteq\mathbb{R}_+^n$, and let $\mathbf{y}\in E_S^{\mathbf{p}}(f)$. Let $(\mathbf{x},\mathbf{z},\mathbf{w})\in R_S(f,\mathbf{y})$. Then $\mathbf{x}:\mathbf{z}::\mathbf{w}:\mathbf{y}$, and  $f(\mathbf{x}):f(\mathbf{z})::f(\mathbf{w}):f(\mathbf{y})$, {\it i.e.,} 
        $$\sqrt[q]{f(\mathbf{z})^q+f(\mathbf{w})^q-f(\mathbf{x})^p}=f(\mathbf{y}).$$ Thus, 
        {\small\begin{eqnarray*}
\overline{\mathbf{y}}_{S,f}&=&m_q\left(\sqrt[q]{f(\mathbf{z})^q+f(\mathbf{w})^q-f(\mathbf{x})^q}\,\bigg|\,(\mathbf{x},\mathbf{z},\mathbf{w})\in R_S(f,\mathbf{y})\right)\\
        &=&m_q\big(f(\mathbf{y})\,|\,(\mathbf{x},\mathbf{z},\mathbf{w})\in R_S(f,\mathbf{y})\big)=f(\mathbf{y}).
        \end{eqnarray*}
        }
        Hence, $f\in AP_{(\mathbf{p};q)}$, and the proof of Proposition ~9 is now complete.

\subsection{Proof of Lemma 11 }

 \begin{proof}
        We only need to prove $(1\Rightarrow 2)$ since the converse will follow from choosing $\tilde p_j:= p'_jp_j$ and $\tilde p_j':=\frac{1}{p_j'}$ (same for $q$) and applying $(1\Rightarrow 2)$ with $\tilde p_j$ in the role of $p_j$ and $\tilde p_j'$ in the role of $p_j'$ (notice that this means $\tilde r r\circ f\circ s\tilde s=f$).
        
        Suppose then that $f$ maps analogies in powers $\mathbf{p}=(p_1,\ldots,p_n)$ to analogies in power $q$. Let $\mathbf{x},\mathbf{y},\mathbf{z},\mathbf{w}\in\mathbb{R}_+^n$ such that $$\mathbf{x}:\mathbf{z}::^{{\bf p\odot\bf p'}}\mathbf{w}:\mathbf{y},$$ {\it i.e.,} $$x^{p'_jp_j}_j+y^{p'_jp_j}_j=z^{p'_jp_j}_j+w^{p'_jp_j}_j,$$ for all $j$. We want to show that 
        $$r\circ f\circ s(\mathbf{x})^{q'q}+r\circ f\circ s(\mathbf{y})^{q'q}=r\circ f\circ s(\mathbf{z})^{q'q}+rfs(\mathbf{w})^{q'q}.$$

        Since $f$ maps analogies in powers $(p_1,...,p_n)$ to analogies in power $q$, and since $$(x_j^{p'_j})^{p_j}+(y_j^{p'_j})^{p_j}=(z_j^{p'_j})^{p_j}+(w_j^{p'_j})^{p_j},$$ we know that $f(x_1^{p'_1},...,x^{p'_n})^q+f(y_1^{p'_1},...,y^{p'_n})^q=f(z_1^{p'_1},...,z^{p'_n})^q+f(w_1^{p'_1},...,w^{p'_n})^q,$ {\it i.e.,} 
        $$r\circ f\circ s(x)^{q'q}+r\circ f\circ s(y)^{q'q}=r\circ f\circ s(z)^{q'q}+r\circ f\circ s(w)^{q'q},$$ as we wanted to show.
    \end{proof}

\subsection{Proof of Theorem~12}\label{Subsection:Exeplicit}    

 \begin{proof}
        In light of Proposition 9, we can replace condition $1$ by 
        \begin{itemize}
            \item[$1'$.] $f$ maps analogies in powers $\mathbf{p}$ to analogies in power $q$.
        \end{itemize}
        
        ($1'\Rightarrow 2$) This is the non-trivial side. We will rely on the previously stated lemma, as well as on the well-known fact that a $\mathbb{Z}$-linear continuous function $g:\mathbb{R}^n\to\mathbb{R}$ must in fact be $\mathbb{R}$-linear\footnote{This result is attributed to Cauchy (1821) and known as ``Cauchy's functional equation''.}.
        
        Suppose $f$ maps analogies in powers $\mathbf{p}=(p_1,...,p_n)$ to analogies in power $q$. Then if we define $p'_j:=\frac{1}{p_j}$ and $q':=\frac{1}{q}$, $s$ and $r$ as in Lemma 11, $g=r\circ f\circ s$ maps arithmetic analogies ({\it i.e.,} analogies in powers $\mathbf{1}=(1,...,1)$) to arithmetic analogies.
       
        Define $\pi: \mathbb{R}_+^m\to\mathbb{R}$, by 
        $\pi(\mathbf{x})= g(\mathbf x)-g(\mathbf 0)$.
       It is not difficult to check that $\pi$ is additive, {\it i.e.,} that \[\pi(\mathbf x+\mathbf y)=\pi(\mathbf x)+\pi(\mathbf y),\quad \text{for every }{\mathbf x,\mathbf y\in\mathbb{R}_+^n}.\]
        By Cauchy's functional equation and the continuity of $\pi$,  $$\pi(\mathbf{x})=\sum_{j=1}^na_jx_j, ~\text{for some $a_1,...,a_n\in\mathbb{R}$.}$$
        Since $\pi+\pi(\mathbf 0)\colon \mathbb{R}_+^n\to \mathbb{R}_+$, we must have $a_j>0\,\,\forall_j$.
        
        Define $b:=\pi(\mathbf 0)$, so that $g=r\circ f\circ s =\pi+b$. Hence, 
        \begin{eqnarray*}
        &&g(\mathbf y)=\pi(\mathbf y)+b ~(\forall {\mathbf y\in\mathbb{R}_+^n})\\
        &\Leftrightarrow& g(x_1^{p_1},...,x_n^{p_n})=\pi(x_1^{p_1},...,x_n^{p_n})+b  ~(\forall {\mathbf x\in\mathbb{R}_+^n})\\
        &\Leftrightarrow&  f(\mathbf x)^q=\sum_{j=1}^na_jx_j^{p_j}+b\\
        &\Leftrightarrow&   f(\mathbf{x})=\sqrt[q]{\sum_{j=1}^na_jx_j^{p_j}+b}
        \end{eqnarray*}
        
        $(2\Rightarrow 1')$
        This is just a matter of checking that these types of functions map analogies to analogies. 
        Let $\mathbf x,\mathbf y,\mathbf  z,\mathbf w\in\mathbb{R}_+^n$ such that for every ${j\in\{1,...,n\}}$, $x_j^{p_j}+y_j^{p_j}=z_j^{p_j}+w_j^{p_j}$.
        
      {\small    \begin{eqnarray*}
f(\mathbf{x})^{q}+f(\mathbf{y})^{q}&=&\sqrt[q]{\sum_{j=1}^n a_{ij}x_j^{p_j}\!+\!b_i}^{q}\!\!\!+\sqrt[q]{\sum_{j=1}^n a_{ij}y_j^{p_j}\!+\!b_i}^{q}\\
        &=&\sum_{j=1}^n [a_{ij}x_j^{p_j}+b_i]+\sum_{j=1}^n [a_{ij}y_j^{p_j}+b_i]\\
        &=&\sum_{j=1}^n [a_{ij}(x_j^{p_j}+y_j^{p_j})+2b_i]\\
        &=&\sum_{j=1}^n [a_{ij}(z_j^{p_j}+w_j^{p_j})+2b_i]\\
        &=&\sum_{j=1}^n [a_{ij}z_j^{p_j}+b_i]+\sum_{j=1}^n [a_{ij}w_j^{p_j}+b_i]\\
    &=&f(\mathbf{z})^{q}+f(\mathbf{w})^{q}
        \end{eqnarray*}\qedhere
  }  \end{proof}

\subsection{Proof of Proposition 16}
The proof follows the following sequence of (in)equalities:
 {\small 
 \begin{eqnarray*}
 &&d_q(f(x),sol_q(f(a),f(b),f(c)))^q\\
 &=&{|f(x)^q+f(a)^q-f(b)^q-f(c)^q}|\\
 &=&|f(b)^q-g(b)^q+g(b)^q+f(c)^q-g(c)^q+g(c)^q\\
 &-&f(a)^q-g(a)^q+g(a)^q-f(x)^q|\\
 &=&|(f^q-g^q)(b)+(f^q-g^q)(c)-(f^q-g^q)(a)\\
 &+& (g^q(b)+g^q(c)-g^q(a)-f^q(x))|\\
     &\leq&|(f^q-g^q)(a)|+|(f^q-g^q)(b)|+|(f^q-g^q)(c)|+|(f^q-g^q)(x)|\\
     &\leq&||(f^q-g^q)||_\infty+||(f^q-g^q)||_\infty+||(f^q-g^q)||_\infty+||(f^q-g^q)||_\infty\\
 &\leq& 4(d_{q,\infty}(f,g))^q\leq4\delta^q,
 \end{eqnarray*}
 }
         where we used the fact that $g(x)^q=g(b)^q+g(c)^q-g(a)^q$, since $g$ maps analogies in $\mathbf{p}=(p_1,...,p_n)$ to analogies in $q$.

\subsection{Proof of Proposition 20}\label{app:p6}
     
\begin{proof}
    Define $h: D\to\mathbb{R}_+$ by $h(x)= \overline{x}_{S,f}$. Then
    $dist_q(h,f)^q$ is equal to
    \[\mathbb{E}_q(x\mapsto d_q(f(x),h(x))^q=\mathbb{E}_1(x\mapsto d_q(f(x),\overline{x}_{S,f}))^q)\]

    By Remark 15,
    we can rewrite it as
  {\small $$\mathbb{E}\left(x\mapsto \frac{1}{m}\left|\sum_{i=1}^m(f(x)^q+f(a^{(x)}_i)^q-f(b^{(x)}_i)^q-f(c^{(x)}_i)^q\right|\right)
  $$
}
    Applying the triangle inequality (for the absolute value), and the monotonicity and linearity of $\mathbb{E}$, we get that $dist_q(h,f)^q$ is upper bounded by {\small}
{\small\begin{equation}
    \frac{1}{m}\sum_{i=1}^m\mathbb{E}\left(x\mapsto\left|f(x)^q+f(a^{(x)}_i)^q-f(b^{(x)}_i)^q-f(c^{(x)}_i)^q\right|\right)
    \label{eq1}
    \end{equation}
    }
We are thus left with the task of upper-bounding  $$\mathbb{E}\left(x\mapsto\left|f(x)^q+f(a^{(x)}_i)^q-f(b^{(x)}_i)^q-f(c^{(x)}_i)^q\right|\right).$$
Let $\mathbf{x}\in D$, $i\in \{1,...,m\}$. This can be done using the same technique as in the previous proposition. Write $a,b,c$ for $a_i^{(\mathbf{x})},b_i^{(\mathbf{x})},c_i^{(\mathbf{x})}$.
 {\small   
   \begin{eqnarray*}
       \mathbb{E}(x&\mapsto&|f(x)^q+f(a)^q-f(b)^q-f(c)^q|)\\
       =\mathbb{E}(x&\mapsto& |(f^q-g^q)(x)+(f^q-g^q)(a)\\&-&(f^q-g^q)(b)-(f^q-g^q)(c)|)\\
       \leq\mathbb{E}(x&\mapsto& |(f^q-g^q)(x)|+|(f^q-g^q)(a)|\\
       &+&|(f^q-g^q)(b)|+|(f^q-g^q)(c)|)\\
       =\mathbb{E}(x&\mapsto&|(f^q-g^q)(x)|+\mathbb{E}(x\mapsto |(f^q-g^q)(a)|)\\
       +\mathbb{E}(x&\mapsto&|(f^q-g^q)(b)|)+\mathbb{E}(x\mapsto |(f^q-g^q)(c)|)\\ 
       &=& 4\mathbb{E}(|f^q-g^q|)=4dist_q(f,g)^q=4\delta^q 
   \end{eqnarray*}
}
    Substituting this result in \eqref{eq1} we obtain
$$dist_q(h,f)^q\leq \frac{1}{m}\sum_{i=1}^m 4\delta^q$$ which is equivalent to $$ dist_q(x\mapsto\overline{x}_{S,f},f)\leq \sqrt[q]{4}\delta \qedhere$$  
\end{proof}
   

\end{document}